\documentclass[twoside,11pt]{article}

\usepackage{microtype}
\usepackage{subfigure}
\usepackage{booktabs}
\usepackage{multirow}
\usepackage{amsmath}
\usepackage{mathtools}
\usepackage{amsthm}
\usepackage{amsfonts}
\usepackage[textsize=tiny]{todonotes}
\usepackage{tikz}
\usepackage{xcolor}
\usepackage{comment}
\usepackage[normalem]{ulem}
\usepackage{algorithm}
\usepackage{algorithmic}

\usepackage{jmlr2e}
\usepackage{thmtools}
\usepackage{thm-restate}
\usepackage[capitalize,noabbrev]{cleveref}
\newtheorem{assumption}[theorem]{Assumption}

\definecolor{ao}{rgb}{0.3, .7, 0.0}

\newcommand{\eee}{\color{black}}
\newcommand{\mzz}{\color{black}}
\newcommand{\EMNGDfull}{\textbf{E}nergy \textbf{M}anifold \textbf{N}atural \textbf{G}radient \textbf{D}escent (\textbf{EMNGD})}

\usepackage{lastpage}
\jmlrheading{TBD}{2026}{1-\pageref{LastPage}}{7/26}{}{TBD}{Zhangyong Liang and Huanhuan Gao}
\makeatletter
\def\ps@jmlrtps{\let\@mkboth\@gobbletwo%
\def\@oddhead{\scriptsize Journal of Machine Learning Research (2026) \hfill Submitted 7/26}%
\def\@oddfoot{}%
\def\@evenhead{}\def\@evenfoot{}}
\makeatother

\ShortHeadings{Energy Manifold Natural Gradient Descent}{Liang and Gao}
\firstpageno{1}

\begin{document}

\title{Energy Manifold Natural Gradient Descent: Riemannian Optimization for Neural PDE Solvers}

\author{\name Zhangyong Liang
       \email zyliang1994@tju.edu.cn \\
       \addr National Center for Applied Mathematics \\
       Tianjin University \\
       Tianjin, 300072, China
       \AND \name Huanhuan Gao
       \email gao\_huanhuan@jlu.edu.cn\\
       \addr  School of Mechanical and Aerospace Engineering \\
       Jilin University \\
       Changchun, 130025, China
       }

\editor{My Editor}

\maketitle

\begin{abstract}%
Energy natural gradient descent (ENGD) aligns parameter updates with the curvature of an underlying function-space energy, but existing formulations assume an unconstrained Euclidean parameter domain.
We introduce \EMNGDfull{}, a manifold optimization framework for physics-informed and variational neural PDE solvers whose parameters lie on a Riemannian manifold.
EMNGD restricts the energy-induced quadratic model to feasible tangent directions and uses retractions to preserve parameter constraints throughout optimization.
Under coercivity, we prove that the push-forward of the undamped EMNGD direction is the best feasible approximation to the function-space Newton vector in the energy metric.
We establish coordinate invariance, exact reduction to ENGD in Euclidean space, global first-order convergence with Armijo backtracking, and robustness to inexact tangent solves.
For quadratic residual energies and generalized Gauss--Newton pullbacks, the Woodbury identity transfers the tangent system to sample space without changing the direction.
Nystr\"om approximation provides scalable sample-space solves with controlled direction error and recovers the exact direction after iterative convergence.
On the evaluated neural PDE benchmarks, EMNGD achieves higher accuracy and faster convergence than the compared state-of-the-art baselines.
Woodbury preserves the EMNGD direction, while scalable-solver diagnostics quantify the accuracy--cost trade-off of preconditioning and residual subsampling.
\end{abstract}

\begin{keywords}
energy natural gradient descent, manifold optimization, neural PDE solvers, woodbury identity, nystr\"om approximation
\end{keywords}

\section{Introduction}

Neural PDE solvers parameterize the unknown solution and minimize a PDE-based loss.
Early work used residual minimization with neural networks~\citep{dissanayake1994neural, lagaris1998artificial}.
PINNs minimize strong-form residuals~\citep{raissi2019physics}, while the deep Galerkin method uses a related residual formulation~\citep{sirignano2018dgm}.
The deep Ritz method minimizes a variational energy~\citep{weinan2018deep}.
Other neural PDE solvers include deep BSDE methods, deep splitting methods, and Fourier neural operators~\citep{han2018solving, weinan2017deep, li2021fourier}.
Recent reviews survey the broader field~\citep{beck2020overview, weinan2021algorithms}.

Training neural PDE solvers to high accuracy remains difficult.
Stiff residual losses and poor conditioning can slow first-order optimization~\citep{wang2021understanding, krishnapriyan2021characterizing}.
Loss weighting and adaptive residual sampling address part of the problem~\citep{wang2021understanding, van2022optimally, wang2022and}.
Curricula and related training strategies provide further controls~\citep{lu2021deepxde, nabian2021efficient, daw2022rethinking}.
Other studies examine residual imbalance and training failure modes~\citep{zapf2022investigating, wang2022respecting, wu2023comprehensive}.
Greedy methods, saddle-point formulations, and particle-swarm methods offer alternatives to direct gradient optimization~\citep{hao2021efficient, zeng2022competitive, davi2022pso}.

Second-order methods instead change the geometry of the update.
Energy natural gradient descent (ENGD) pulls function-space energy curvature back to parameter space~\citep{muller2023achieving, mullerposition}.
Related PDE-constrained methods use mass or stiffness matrices as function-space Gramians~\citep{schwedes2016iteration, schwedes2017mesh}.
Sobolev, Fisher--Rao, and Wasserstein natural gradients have also been studied for PINNs~\citep{nurbekyan2022efficient}.
Gauss--Newton natural gradients and Kronecker-factored curvature provide further approximations~\citep{jnini2024gauss, dangel2024kronecker}.

For quadratic residual energies, the Woodbury identity moves the linear solve from parameter space to sample space.
MinSR uses an analogous sample-space construction in variational Monte Carlo~\citep{chen2023efficient, rende2024simple}.
SPRING momentum and randomized Nystr\"om sketches have been adapted to PINNs~\citep{goldshlager2024kaczmarz, frangella2023randomized}.
Classical Nystr\"om methods construct low-rank positive-semidefinite kernel approximations~\citep{gittens2016nystrom}.
Recent work extends Nystr\"om constructions to Riemannian manifolds~\citep{nie2026nystrom}.

Existing ENGD formulations assume an unconstrained Euclidean parameter domain.
Some neural PDE models impose parameter constraints whose feasible values form a Riemannian manifold $\mathcal M$.
At $x\in\mathcal M$, the realization map sends allowable tangent directions into function space:
\[
T_x\mathcal M \xrightarrow{\ dP_x\ } dP_x(T_x\mathcal M)\subset X.
\]

The energy Hessian defines the curvature of the resulting function-space changes.
An ambient ENGD step followed by projection does not generally minimize the constrained quadratic model.
For an ambient curvature operator $A_x$, gradient $g_x$, and tangent projector $\Pi_x$, one generally has
\begin{equation}\label{eq:intro_projection_noncommutation}
\Pi_x A_x^{-1}g_x \neq
\left(\left.\Pi_x A_x\Pi_x\right|_{T_x\mathcal M}\right)^{-1}\Pi_xg_x.
\end{equation}

An external projection restores feasibility but can change the minimizer of the tangent quadratic model.
Penalty formulations also change the PDE energy and energy curvature.

The mismatch raises a natural question: \textbf{How can an energy natural-gradient method respect a parameter manifold without changing the PDE energy?}

To answer the question, we propose \EMNGDfull{}.
Figure~\ref{fig:emngd_schematic} depicts EMNGD on the energy landscape over $\mathcal M$.
The left view traces feasible iterates from $\theta_0$ toward the low-energy solution $\theta^\ast$, while the inset shows the local update at $x_k$.
The energy-metric solve produces $\eta_{x_k}\in T_{x_k}\mathcal M$, and $R_{x_k}$ maps the tangent point $\widetilde{x}_{k+1}=x_k-\alpha_k\eta_{x_k}$ to the feasible iterate $x_{k+1}$.
The construction separates parameter constraints from function-space energy geometry.
\begin{figure}[t]
    \centering
    \includegraphics[width=0.92\linewidth]{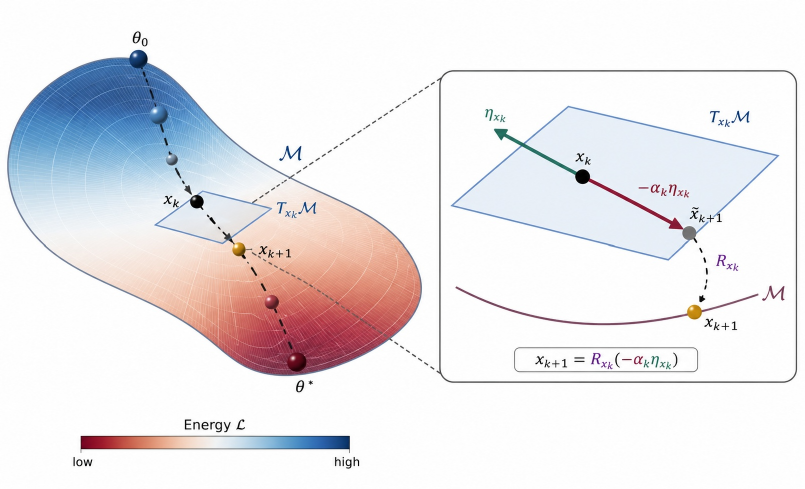}
    \vspace{-12pt}
    \caption{Schematic of an EMNGD update on a parameter manifold. A tangent step $-\alpha_k\eta_{x_k}$ at $x_k$ is retracted to the feasible iterate $x_{k+1}$.}
    \label{fig:emngd_schematic}
\end{figure}
\paragraph{Contributions.}
The contributions are as follows:
\begin{itemize}
    \item \textbf{Intrinsic energy manifold geometry.}
    EMNGD extends ENGD from an unconstrained Euclidean parameter domain to a constrained Riemannian parameter manifold, aligning the parameter geometry with PDE residual constraints.
    The energy-induced quadratic model is defined directly over feasible tangent directions, while retraction-based updates preserve the parameter constraints.
    The intrinsic construction retains the original residual energy and differs from post-hoc projection of an ambient ENGD step.

    \item \textbf{Best-admissible Newton correction.}
    The main theoretical result characterizes EMNGD as the best admissible approximation to the function-space Newton correction under the energy metric.
    The admissible correction is restricted jointly by the neural realization map and the tangent space of the parameter manifold.
    For quadratic energies, the natural-gradient vector represents the projected current solution error, while the negative retracted step moves toward the corresponding projected solution correction.

    \item \textbf{Geometric consistency and convergence.}
    Positive damping makes the tangent energy metric positive definite and yields a unique EMNGD direction.
    Coordinate invariance and exact reduction to ENGD in Euclidean space establish consistency across parameter representations.
    Under the stated metric-equivalence and retraction-smoothness assumptions, Armijo backtracking yields global first-order convergence.
    Controlled inexact tangent solves also preserve the descent property.

    \item \textbf{Tangent-space scalable solvers.}
    For quadratic residual energies and generalized Gauss--Newton pullbacks, exact Woodbury duality transfers the EMNGD tangent system to sample space without changing the damped direction.
    Nystr\"om sketch-and-solve provides a low-rank approximate direction, while Nystr\"om preconditioning recovers the exact direction after iterative convergence.
    An explicit error bound connects kernel approximation quality and damping with the accuracy of the computed tangent direction.

    \item \textbf{Scalability with direction control.}
    Numerical studies verify the Euclidean reduction and the primal--dual agreement of the Woodbury implementation.
    Large-sample diagnostics quantify the effects of sketch rank, damping, and residual subsampling on direction error, convergence, memory consumption, and runtime.
    The results identify the regimes in which scalable solvers retain EMNGD accuracy and the regimes in which approximation or sampling error becomes dominant.
    
\end{itemize}

\paragraph{Notation.}
We denote the space of $p$-integrable functions on \(\Omega\subseteq\mathbb R^d\) by \(L^p(\Omega)\) and use the canonical norm of \(L^p(\Omega)\). 
For a sufficiently smooth function $u$, let $\partial_i u = \partial u / \partial x_i$.  Let $(D^lu)_{i_1, \dots, i_l} \coloneqq \partial_{i_1}\dots\partial_{i_l} u$ denote the $l$th-derivative tensor. 
Let $\nabla u = (\partial_1 u, \dots, \partial_d u)^\top$ denote the gradient.  Define the Laplace operator by $\Delta u \coloneqq \sum_{i=1}^d \partial_i^2 u$.
We denote the \emph{Sobolev space} 
of functions with weak derivatives up to order \(k\) in $L^p(\Omega)$ by \(W^{k,p}(\Omega)\), which is a Banach space with the norm
\[ \lVert u\rVert_{W^{k,p}(\Omega)}^p \coloneqq \sum_{l=0}^k\lVert D^{l} u \rVert_{L^p(\Omega)}^p, \]
in the following, we mostly work with the case $p=2$ and write $H^k(\Omega)$ instead of $W^{k,2}(\Omega)$.

Let \(d, m, L, N_0, \dots, N_L\) be natural numbers.  Let $\theta = \left((A_1, b_1), \dots, (A_L, b_L)\right)$, where \(A_l\in\mathbb R^{N_{l}\times N_{l-1}}\), \(b_l\in\mathbb R^{N_l}\), \(N_0 = d\), and \(N_L = m\).  Each pair \((A_l, b_l)\) defines an affine map \(T_l\colon \mathbb R^{N_{l-1}} \to\mathbb R^{N_l}\).  Given an activation function \(\rho\colon\mathbb R\to\mathbb R\), the \emph{neural network function with parameters} \(\theta\) is
\[u_\theta\colon\mathbb R^d\to\mathbb R^m, \quad x\mapsto T_L(\rho(T_{L-1}(\rho(\cdots \rho(T_1(x)))))).\]

The \emph{number of trainable parameters} of such a network is \(\sum_{l=0}^{L-1}(n_{l}+1)n_{l+1}\)
.
We call a network with depth \(2\) \emph{shallow} and a deeper network \emph{deep}. 
In the remainder, we restrict ourselves to the case \(m=1\) since we only consider real-valued functions.
Our experiments use $\tanh$ activations, which are required for the smoothness of $u_\theta$ and the map $\theta\mapsto u_\theta$.
For $A\in\mathbb R^{n\times m}$, we denote any pseudo inverse of $A$ by $A^+$.

\section{Preliminaries}

Various neural solvers for the approximate solution of PDEs have been suggested~\citep{beck2020overview, weinan2021algorithms, kovachki2021neural}.
Neural PDE solvers parameterize an approximate solution and minimize either a residual energy or a variational energy.
The preliminary discussion introduces both formulations, fixes a common function-space setup, and summarizes the optimization motivation for natural gradients.

\paragraph{Residual-form neural PDE solvers.}
Residual-form neural PDE solvers minimize the PDE residual and boundary mismatch.
Consider a general partial differential equation of the form
\begin{align}
    \begin{split}
        \mathcal L u 
        & = f \quad \text{in } \Omega \\
        \mathcal B u & = g \quad \text{on } \partial\Omega,
    \end{split}
    \label{eq:Poisson}
\end{align}
where $\Omega\subseteq\mathbb R^d$ is open, $\mathcal L$ is a possibly nonlinear partial differential operator, and $\mathcal B$ is a boundary-value operator.  We seek $u$ in a Hilbert space $X$.  Assume that $f$ is square integrable on $\Omega$ and $g$ is square integrable on $\partial\Omega$.  Equation~\eqref{eq:Poisson} then admits the minimization formulation
\begin{equation}
    E(u) =  \int_\Omega (\mathcal L u - f)^2
    \mathrm{d}x + \tau \int_{\partial\Omega} (\mathcal B u-g)^2\mathrm ds,
\end{equation}
for a penalization parameter $\tau > 0$.  A function $u\in X$ solves \eqref{eq:Poisson} exactly when $E(u)=0$.  For an approximate solution, parameterize $u_\theta$ by a neural network and minimize the parameters $\theta\in\mathbb R^p$ using
\begin{equation}
    L(\theta) \coloneqq \int_\Omega (\mathcal Lu_\theta - f)^2\mathrm dx + \tau \int_{\partial\Omega}\mathcal (\mathcal Bu_\theta - g)^2\mathrm ds.
\label{eq:loss}
\end{equation}

Residual minimization for neural PDE solvers traces back to~\citep{dissanayake1994neural, lagaris1998artificial}.
The deep Galerkin method and physics-informed neural networks use related residual objectives~\citep{sirignano2018dgm,raissi2019physics}.
Data terms can be added to the loss.  Numerical implementations discretize the integrals with interior and boundary samples.

\paragraph{Variational neural PDE solvers.}
Weak PDE formulations often use an energy functional whose Euler--Lagrange equations recover the weak form.  \citet{ritz1909neue} used the idea to compute polynomial approximation coefficients.  \citet{weinan2018deep} introduced the name \emph{deep Ritz method} for neural networks.
Given a variational energy $E\colon X \to \mathbb R$ on a Hilbert space $X$, parameterize the ansatz by $u_\theta$.  The loss is $L(\theta)\coloneqq E(u_\theta)$. 
For $-\Delta u = f$, the residual energy is $u\mapsto\lVert \Delta u + f \rVert_{L^2(\Omega)}^2$.  The variational energy is $u\mapsto\frac12\lVert \nabla u\rVert_{L^2(\Omega)}^2-\int_\Omega fu\mathrm dx$.  The two energies require different smoothness and belong to different Sobolev spaces.

Essential boundary values enter the deep Ritz method differently from PINNs. 
For PINNs, the unique minimizer is the PDE solution for every $\tau>0$.  In the deep Ritz method, the penalized minimizer solves a perturbed Robin problem.  Accurate approximation of the original problem requires large penalty parameters, which cause ill-conditioning~\citep{muller2022error, courte2023robin}.

\paragraph{Function-space setup.}
Residual and variational formulations minimize an energy $E\colon X\to\mathbb R$.  The parameter loss is $L(\theta) \coloneqq E(u_\theta)$.  
Assume that $X$ is a Hilbert space, $u_\theta\in X$, and $E$ has a unique minimizer $u^*\in X$. 
And assume that $P\colon\mathbb R^p\to X$, $\theta\mapsto u_\theta$, is differentiable.  Define $\mathcal F_\Theta=\{u_\theta:\theta\in\mathbb R^p\}$.  The generalized tangent space of $\mathcal F_\Theta$ is
\begin{equation}
    T_\theta \mathcal F_\Theta \coloneqq  
    \operatorname{span} \left\{\partial_{\theta_i} u_\theta : i=1, \dots, p \right\}.
\end{equation}

\paragraph{Optimization challenge.}
First-order optimization can stagnate on residual losses, even for simple PDEs.  Residual stiffness contributes to poor conditioning~\citep{wang2021understanding}.  Squared residuals can further increase the condition number~\citep{zeng2022competitive}.  Poor conditioning slows iterative solvers such as gradient descent.
Figure~\ref{fig:problem_1d} illustrates the challenge in one dimension.  Across Poisson, heat, and nonlinear equations, SGD, Adam, BFGS, L-BFGS, and Adam--L-BFGS either stagnate at large relative $L^2$ errors or need many iterations.

\begin{figure}[t]
    \centering
    \includegraphics[width=\linewidth]{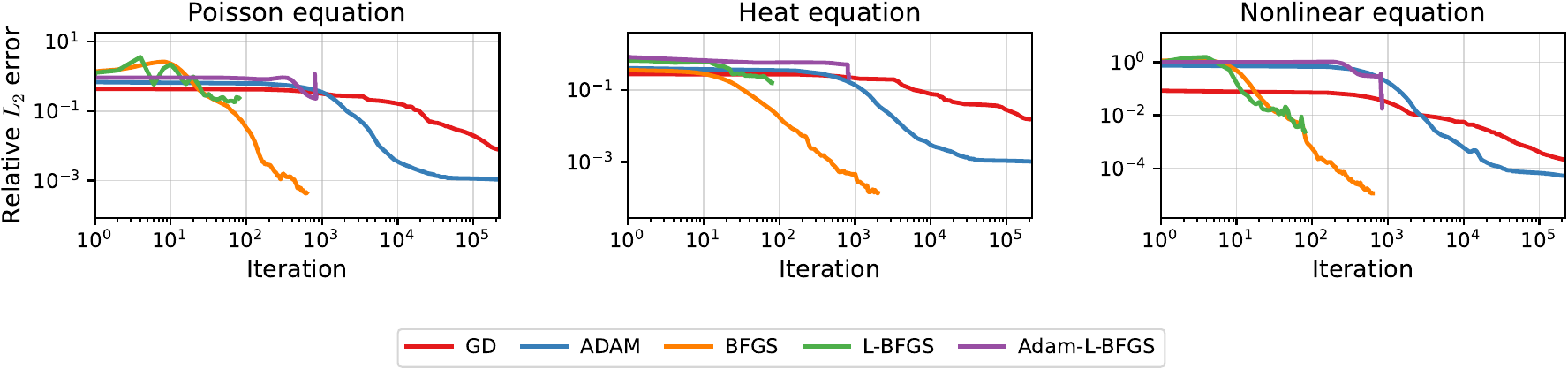}
    \caption{Relative $L^2$ errors of standard optimizers on one-dimensional PDE benchmarks.}
    \label{fig:problem_1d}
\end{figure}

\paragraph{Natural Gradient Descent.}
\citet{amari1996neural} originally proposed natural gradient descent (NGD) for Euclidean parameter optimization.
Given a positive-definite metric $G(\theta)$, NGD solves $G(\theta)d_\theta=\nabla L(\theta)$ and takes $-d_\theta$ as the descent direction.
In statistical models, $G(\theta)$ is usually the Fisher information matrix.
The metric accounts for local model geometry and yields a coordinate-invariant direction when $G(\theta)$ transforms as a pullback metric.
\citet{muller2023achieving} introduced ENGD for neural PDE solvers by replacing the Fisher metric with an energy-induced metric.
For $u_\theta=P(\theta)$, the energy Gram matrix has entries $G_E(\theta)_{ij}=D^2E(u_\theta)(\partial_{\theta_i}u_\theta,\partial_{\theta_j}u_\theta)$.
The damped system $(G_E(\theta)+\lambda I)d=\nabla L(\theta)$ defines the ENGD direction.
The energy metric gives the direction a direct function-space interpretation.  The following sections extend the construction to constrained parameter manifolds.


\paragraph{Manifold Optimization.}
Let $(\mathcal M,g^0)$ be a smooth Riemannian manifold and consider the minimization of a differentiable objective $F\colon\mathcal M\to\mathbb R$.
At $x\in\mathcal M$, the tangent space $T_x\mathcal M$ contains the feasible local directions.  The metric $g_x^0$ defines an inner product on that tangent space.
The Riemannian gradient $\operatorname{grad}^{0}F(x)\in T_x\mathcal M$ is defined by
\begin{equation}
    g_x^0(\operatorname{grad}^{0}F(x),\xi)=dF_x[\xi]\qquad \text{for all } \xi\in T_x\mathcal M.
\end{equation}

A retraction $R_x\colon T_x\mathcal M\to\mathcal M$ maps a tangent vector back to the manifold.  A Riemannian optimization step first computes $\eta_x\in T_x\mathcal M$ and then sets
\begin{equation}
    x_{k+1}=R_{x_k}(-\alpha_k\eta_{x_k}).
\end{equation}

Manifold optimization is useful when parameters satisfy hard constraints.  The realization map sends a tangent direction to a first-order change in function space.

\section{Energy manifold natural gradient descent (EMNGD)}\label{sec:natgrad}
We next develop the geometric formulation of energy natural gradient descent.
Classical ENGD uses function-space energy curvature on the tangent space of the current neural model.
The Euclidean formulation has several limitations.  The curvature system can be expensive and ill-conditioned.  The Newton interpretation is local to the current tangent space.  Damping, pseudoinverses, or least-squares solves are often needed for numerical stability.
Existing scalable ENGD variants only change how that system is solved or approximated.  Examples include kernel, dual, randomized, and low-rank linear algebra.  Such variants do not encode hard constraints, quotient symmetries, or other feasible-set geometries.
The manifold formulation begins with the tangent space $T_x\mathcal M$.  The pullback energy Hessian acts on that space, and a retraction follows each step.
\citet{amari1996neural} popularized natural gradients for parameter estimation in supervised learning and blind source separation.
Natural gradients use a chosen metric to define update directions, including Fisher, product-Fisher, Wasserstein, and Sobolev geometries~\citep{kakade2001natural, li2018natural, nurbekyan2022efficient}, and have been applied to reinforcement learning~\citep{kakade2001natural, peters2003reinforcement, bagnell2003covariant, morimura2008new}, inverse problems~\citep{nurbekyan2022efficient}, neural-network training~\citep{schraudolph2002fast, pascanu2014revisiting, martens2020new}, and generative models~\citep{shen2020sinkhorn, lin2021wasserstein}.
A key issue for natural gradients is the choice of function-space geometry.  The geometry can be defined axiomatically or through the Hessian of a potential function~\citep{amari2010information, amari2016information, wang2022hessian, Mueller2022Convergence}. 
EMNGD uses the exact function-space Hessian when that Hessian is positive semidefinite on realized directions.
For residual objectives, the implementation may instead use the generalized Gauss--Newton (GGN) curvature.

For $E(u)=\tfrac12\|\mathcal Q(u)\|^2$, the exact Hessian is
\[
D^2E(u)[v,w]=\langle D\mathcal Q(u)v,D\mathcal Q(u)w\rangle
+\langle \mathcal Q(u),D^2\mathcal Q(u)[v,w]\rangle.
\]

The GGN retains the first term.  The approximation equals the exact Hessian when $\mathcal Q$ is affine.  For nonlinear $\mathcal Q$, the approximation discards the residual-weighted second derivative.
Related curvature methods have been proposed for supervised neural-network training~\citep{ren2019efficient, cai2019gram, gargiani2020promise, martens2020new}.
Our applications may involve infinite-dimensional or non-strongly-convex objectives.

\begin{assumption}[Geometric and analytic setting]\label{ass:basic_emngd}
Let $(\mathcal M,g^0)$ be a finite-dimensional smooth Riemannian manifold.  
And let $R$ be a retraction, i.e.,
\[
    R_x(0_x)=x,\qquad DR_x(0_x)=\mathrm{id}_{T_x\mathcal M}.
\]

Let $X$ be a real Hilbert space, $E:X\to\mathbb R$ be twice Fr\'echet differentiable, and $P:\mathcal M\to X$ be twice differentiable.
We minimize $F=E\circ P$ on $\mathcal M$.  The differential $J_x=dP_x:T_x\mathcal M\to X$ maps a tangent direction to function space.
The baseline Riemannian gradient is defined by
\[
    g_x^0(\operatorname{grad}^0F(x),\zeta)=dF_x[\zeta],
    \qquad \zeta\in T_x\mathcal M.
\]
\end{assumption}

The parameter manifold $\mathcal M$ and the image $J_x(T_x\mathcal M)\subset X$ have different roles.
The parameter manifold defines the allowed directions.  The image contains the corresponding first-order changes in function space.
The energy $E\colon X\to\mathbb R$ is twice differentiable.  The setting covers both PINNs and the deep Ritz method.
The energy Hessian induces on each tangent space the pullback bilinear form
\begin{equation}\label{eq:pullback_energy_metric}
    g_x^E(\xi,\zeta)\coloneqq D^2E(P(x))(J_x\xi,J_x\zeta),\qquad \xi,\zeta\in T_x\mathcal M.
\end{equation}

If $g_x^E$ is positive definite, the bilinear form defines a Riemannian metric on $\mathcal M$.  If $g_x^E$ is not positive definite, we use the damped form
\begin{equation}\label{eq:damped_energy_metric}
    g_x^{E,\lambda}(\xi,\zeta)\coloneqq g_x^E(\xi,\zeta)+\lambda g_x^0(\xi,\zeta),\qquad \lambda\geq 0,
\end{equation}
for $\lambda>0$, damping gives a regularized approximation to the minimum-norm pseudoinverse solution.
The least-squares implementations use the same regularized tangent system.
The EMNGD direction is the tangent vector $\eta_x\in T_x\mathcal M$ satisfying
\begin{equation}\label{eq:manifold_emngd_direction}
    g_x^{E,\lambda}(\eta_x,\zeta)=dF_x[\zeta]
    =g_x^0(\operatorname{grad}^{0}F(x),\zeta)
    \qquad\text{for all }\zeta\in T_x\mathcal M.
\end{equation}

The equation is a linear system on the tangent space.  EMNGD then updates by
\begin{equation}\label{eq:def_emng_update}
    x_{k+1}=R_{x_k}(-\alpha_k\eta_{x_k}).
\end{equation}

In local coordinates $x=\phi(\xi)$, the Euclidean energy Gram matrix becomes the pullback $\widetilde G_E(\xi)=J_\phi(\xi)^\top G_E(\phi(\xi))J_\phi(\xi)$.
The unconstrained parameter case used in our experiments corresponds to $\mathcal M=\mathbb R^p$, $g^0$ equal to the Euclidean metric, and the retraction $R_\theta(v)=\theta+v$.

\begin{proposition}[Operator form and variational characterization]\label{prop:operator_variational}
Assume that $D^2E(P(x))$ is symmetric positive semidefinite as a bilinear form on $X$.
Then $g_x^E$ is symmetric positive semidefinite on $T_x\mathcal M$.
If $\lambda>0$, then $g_x^{E,\lambda}$ is positive definite and there exists a unique $g_x^0$-self-adjoint positive definite operator
\[
    A_x^\lambda:T_x\mathcal M\to T_x\mathcal M
\]
such that
\begin{equation}\label{eq:operator_definition}
    g_x^0(A_x^\lambda\xi,\zeta)=g_x^{E,\lambda}(\xi,\zeta)
    \qquad\text{for all }\xi,\zeta\in T_x\mathcal M.
\end{equation}

The EMNGD direction is uniquely
\begin{equation}\label{eq:operator_direction}
    \eta_x=(A_x^\lambda)^{-1}\operatorname{grad}^0F(x),
\end{equation}
which is the unique minimizer of
\begin{equation}\label{eq:emngd_quadratic_functional}
    Q_x(\xi)=\frac12 g_x^{E,\lambda}(\xi,\xi)-dF_x[\xi],
    \qquad \xi\in T_x\mathcal M.
\end{equation}
\end{proposition}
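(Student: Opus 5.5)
The argument is finite-dimensional linear algebra on the fixed inner-product space $(T_x\mathcal M,g_x^0)$. We take the four claims in the order stated.

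\emph{Properties of $g_x^E$.} Since $J_x=dP_x$ is linear, $g_x^E(\xi,\zeta)=D^2E(P(x))(J_x\xi,J_x\zeta)$ is bilinear on $T_x\mathcal M$. Symmetry of $D^2E(P(x))$ on $X$ gives symmetry of $g_x^E$. Positive semidefiniteness gives $g_x^E(\xi,\xi)=D^2E(P(x))(J_x\xi,J_x\xi)\ge 0$. Note that $J_x\xi$ may vanish for nonzero $\xi$, so no definiteness is claimed. For $\lambda>0$ we then have
\[
g_x^{E,\lambda}(\xi,\xi)\ge\lambda g_x^0(\xi,\xi)>0
\]
for $\xi\neq0$, because $g_x^0$ is a Riemannian metric. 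Hence $g_x^{E,\lambda}$ is symmetric positive definite.

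\emph{The operator $A_x^\lambda$.} Fix $\xi$. The map $\zeta\mapsto g_x^{E,\lambda}(\xi,\zeta)$ is a linear functional on the finite-dimensional inner-product space $(T_x\mathcal M,g_x^0)$. By the Riesz representation theorem there is a unique vector $A_x^\lambda\xi$ with $g_x^0(A_x^\lambda\xi,\zeta)=g_x^{E,\lambda}(\xi,\zeta)$ for all $\zeta$. Uniqueness of the representer makes $\xi\mapsto A_x^\lambda\xi$ linear, and it also gives uniqueness of the operator. Symmetry of $g_x^{E,\lambda}$ gives $g_x^0(A_x^\lambda\xi,\zeta)=g_x^0(\xi,A_x^\lambda\zeta)$, so $A_x^\lambda$ is $g_x^0$-self-adjoint. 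Positive definiteness of $g_x^{E,\lambda}$ gives $g_x^0(A_x^\lambda\xi,\xi)>0$ for $\xi\ne0$. Therefore $A_x^\lambda$ is injective, and since the dimension is finite it is invertible.

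\emph{The direction formula \eqref{eq:operator_direction}.} Rewrite \eqref{eq:manifold_emngd_direction} as
\[
g_x^0\bigl(A_x^\lambda\eta_x-\operatorname{grad}^0F(x),\zeta\bigr)=0\quad\text{for all }\zeta\in T_x\mathcal M .
\]
Nondegeneracy of $g_x^0$ gives $A_x^\lambda\eta_x=\operatorname{grad}^0F(x)$. Invertibility then yields existence and uniqueness of $\eta_x=(A_x^\lambda)^{-1}\operatorname{grad}^0F(x)$.

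\emph{Variational characterization.} Set $\xi=\eta_x+h$. Bilinearity and symmetry of $g_x^{E,\lambda}$ give
\[
Q_x(\eta_x+h)=Q_x(\eta_x)+\bigl(g_x^{E,\lambda}(\eta_x,h)-dF_x[h]\bigr)+\tfrac12 g_x^{E,\lambda}(h,h).
\]
The middle term vanishes by \eqref{eq:manifold_emngd_direction}. The last term is strictly positive for $h\neq0$. Hence $\eta_x$ is the unique minimizer of $Q_x$ in \eqref{eq:emngd_quadratic_functional}.

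No step is a real obstacle. The one point needing care is that semidefiniteness of $g_x^E$ alone does not give invertibility, because $J_x$ may have a nontrivial kernel. This is exactly where $\lambda>0$ is used: the damping term $\lambda g_x^0$ supplies definiteness, and finite dimensionality of $\mathcal M$ turns injectivity of $A_x^\lambda$ into invertibility.
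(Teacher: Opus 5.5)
Your proof is correct and follows essentially the same route as the paper: semidefiniteness of $g_x^E$, definiteness from the damping term, the Riesz representation on $(T_x\mathcal M,g_x^0)$ with self-adjointness inherited from symmetry, and substitution into the EMNGD equation to obtain $A_x^\lambda\eta_x=\operatorname{grad}^0F(x)$. Your expansion of $Q_x(\eta_x+h)$ and your injectivity-implies-invertibility step only make explicit what the paper compresses into its appeal to the first-order optimality condition and strict convexity.
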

\begin{proof}
For any $\xi\in T_x\mathcal M$, $g_x^E(\xi,\xi)=D^2E(P(x))[J_x\xi,J_x\xi]\geq0$.
If $\lambda>0$ and $\xi\neq0$, then
$g_x^{E,\lambda}(\xi,\xi)\geq\lambda g_x^0(\xi,\xi)>0$.
The Riesz representation theorem on the finite-dimensional inner-product space $(T_x\mathcal M,g_x^0)$ gives $A_x^\lambda$, and symmetry of $g_x^{E,\lambda}$ makes $A_x^\lambda$ self-adjoint.
Substituting~\eqref{eq:operator_definition} into~\eqref{eq:manifold_emngd_direction} gives $A_x^\lambda\eta_x=\operatorname{grad}^0F(x)$.
The first-order optimality condition for $Q_x$ is exactly~\eqref{eq:manifold_emngd_direction}, and strict convexity follows from positive definiteness.
\end{proof}

\begin{theorem}[Coordinate form and coordinate invariance]\label{thm:coordinate_invariance}
Let $\phi:U\subset\mathbb R^q\to\mathcal M$ be a local chart with $x=\phi(y)$ and $e_i=\partial_i\phi(y)$.
Writing $\eta_x=\sum_{i=1}^q v_i e_i$, define
\[
    (G_E^\phi)_{ij}=g_x^E(e_i,e_j),\qquad
    (G_0^\phi)_{ij}=g_x^0(e_i,e_j),\qquad
    b_i=dF_x[e_i].
\]

Then the coordinate vector $v$ satisfies
\begin{equation}\label{eq:coordinate_linear_system}
    (G_E^\phi+\lambda G_0^\phi)v=b.
\end{equation}

For $\lambda>0$, the tangent system has a unique solution and the tangent vector $\eta_x$ is independent of the chosen chart.
For $\mathcal M=\mathbb R^p$ with the Euclidean metric, $P(\theta)=u_\theta$, and $R_\theta(v)=\theta+v$,~\eqref{eq:coordinate_linear_system} becomes
\begin{equation}\label{eq:euclidean_engd_reduction}
    (G_E(\theta)+\lambda I)d=\nabla L(\theta).
\end{equation}

For $\lambda=0$, the Moore--Penrose convention gives the minimum-norm pseudoinverse direction $d=G_E(\theta)^+\nabla L(\theta)$ when the undamped system is singular.
\end{theorem}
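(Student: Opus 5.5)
The plan is to test the defining relation \eqref{eq:manifold_emngd_direction} against the coordinate basis and then use Proposition~\ref{prop:operator_variational} for existence and uniqueness.

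\emph{Step 1: the coordinate system.} Since $\phi$ is a chart, $d\phi_y$ is an isomorphism. Hence $\{e_i\}_{i=1}^q$ is a basis of $T_x\mathcal M$, and $\eta_x=\sum_i v_ie_i$ for a unique $v\in\mathbb R^q$. Taking $\zeta=e_j$ in \eqref{eq:manifold_emngd_direction} and using bilinearity of $g_x^E$ and $g_x^0$ gives
\[
\sum_i\bigl(g_x^E(e_i,e_j)+\lambda g_x^0(e_i,e_j)\bigr)v_i=dF_x[e_j].
\]
This is \eqref{eq:coordinate_linear_system}, by symmetry of the Gram matrices. Conversely, both sides of \eqref{eq:manifold_emngd_direction} are linear in $\zeta$. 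So if \eqref{eq:coordinate_linear_system} holds, \eqref{eq:manifold_emngd_direction} holds for every $\zeta\in T_x\mathcal M$. The tangent equation and the coordinate system are therefore equivalent.

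\emph{Step 2: uniqueness and invariance.} For $\lambda>0$, Proposition~\ref{prop:operator_variational} (under its semidefiniteness hypothesis on $D^2E$) shows $g_x^{E,\lambda}$ is positive definite. Its Gram matrix $G_E^\phi+\lambda G_0^\phi$ is then symmetric positive definite, hence invertible, so $v$ is unique. Chart independence needs no transformation-law computation. Equation \eqref{eq:manifold_emngd_direction} is stated intrinsically, and Proposition~\ref{prop:operator_variational} says it has exactly one solution $\eta_x=(A_x^\lambda)^{-1}\operatorname{grad}^0F(x)$. Any chart $\psi$ yields a vector $\sum_i w_i\partial_i\psi$ that solves the same intrinsic equation by Step 1, so it coincides with $\eta_x$.

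As a consistency check, one can also verify the transformation directly. With $y'=\tau(y)$ and $T=D\tau$, the Gram matrices transform as $T^{-\top}GT^{-1}$, $b$ transforms as $T^{-\top}b$, and $v$ transforms as $Tv$, consistent with $v$ being coordinates of a fixed vector.

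\emph{Step 3: Euclidean reduction.} Take $\mathcal M=\mathbb R^p$ with the identity chart, so $e_i$ is the standard basis vector. Then $G_0^\phi=I$ and $J_\theta e_i=\partial_{\theta_i}u_\theta$, so $G_E^\phi=G_E(\theta)$. Also $b_i=\partial_{\theta_i}L(\theta)$, and \eqref{eq:euclidean_engd_reduction} follows. The retraction $\theta+v$ satisfies $R_\theta(0)=\theta$ and $DR_\theta(0)=\mathrm{id}$, as required.

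\emph{Step 4: the undamped case.} For $\lambda=0$ with singular $G_E$, the main point is consistency: $\nabla L(\theta)$ must lie in $\operatorname{range}G_E(\theta)$ for the pseudoinverse direction to solve the system rather than only a least-squares problem. I expect this to be the main obstacle.

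I would argue consistency for the GGN or residual case. There $G_E=J^*J$ and $\nabla L=J^*r$, so $\nabla L\in\operatorname{range}J^*=\operatorname{range}G_E$, and the equation is solvable. More generally, I would interpret the statement as a convention. Among all least-squares solutions, $G_E^+\nabla L$ is the one of minimum Euclidean norm. It also equals $\lim_{\lambda\downarrow0}(G_E+\lambda I)^{-1}\nabla L$, which I would show by diagonalizing the symmetric matrix $G_E$: the components along the null space are then killed, while the components on the range converge to $\mu_k^{-1}$ times the corresponding coefficients of $\nabla L$.
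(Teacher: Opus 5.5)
Your proposal is correct and follows the paper's proof essentially step for step: you test \eqref{eq:manifold_emngd_direction} with $e_j$, deduce positive definiteness of $G_E^\phi+\lambda G_0^\phi$ from that of $g_x^{E,\lambda}$, obtain chart independence from uniqueness of the intrinsic equation, and use the identity chart for the Euclidean reduction; you are also right to state explicitly the semidefiniteness hypothesis of Proposition~\ref{prop:operator_variational}, which the paper's proof uses without saying so. One small correction to Step 4: $\lim_{\lambda\downarrow0}(G_E+\lambda I)^{-1}\nabla L=G_E^+\nabla L$ holds only when $\nabla L\perp\ker G_E$ (the consistency you verified in the residual/GGN case), since otherwise the null-space components grow like $\lambda^{-1}$ instead of vanishing, so outside that case the $\lambda=0$ clause is only a convention, which is how the paper treats it.
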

\begin{proof}
Testing~\eqref{eq:manifold_emngd_direction} with each basis vector $e_j$ gives~\eqref{eq:coordinate_linear_system}.
For $\lambda>0$, the coefficient matrix is positive definite because
\begin{equation}
v^\top(G_E^\phi+\lambda G_0^\phi)v
    =g_x^{E,\lambda}\Big(\sum_i v_i e_i,\sum_i v_i e_i\Big)>0,
\end{equation}
for every nonzero $v$.
The vector reconstructed from the coordinate solution satisfies the weak equation~\eqref{eq:manifold_emngd_direction}; uniqueness of that equation implies chart independence.
The Euclidean reduction follows from the identity chart, for which $G_0=I$ and $b=\nabla L(\theta)$.
\end{proof}

\begin{figure}[t]
    \centering
    \includegraphics[width=\linewidth]{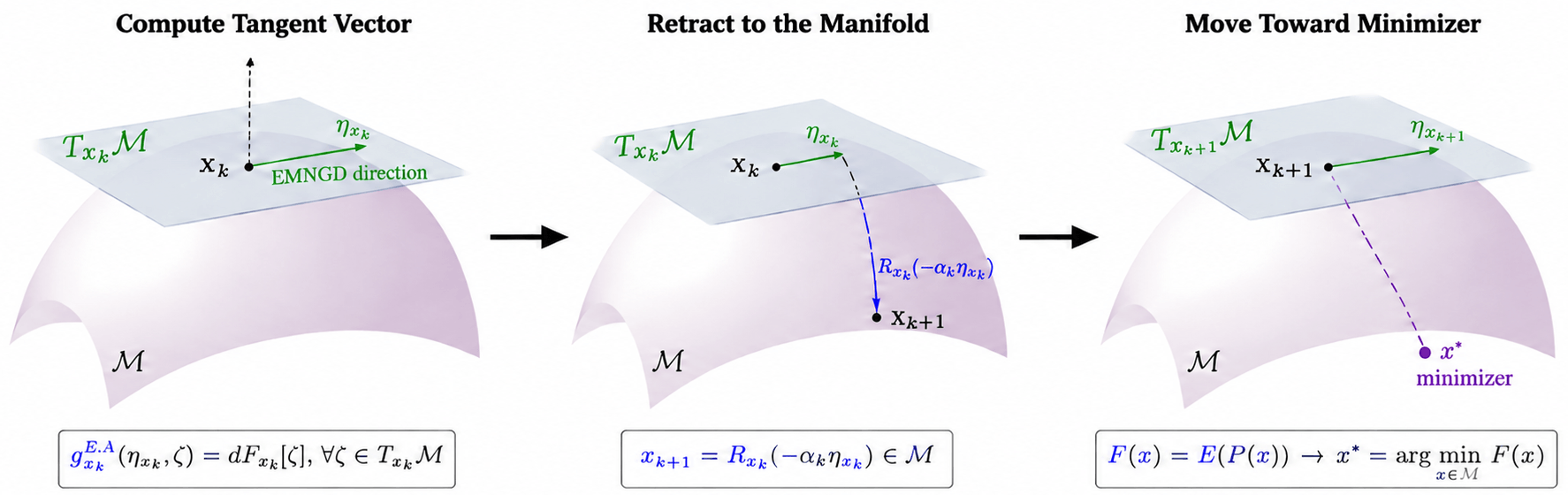}
    \caption{EMNGD update on a parameter manifold. From left to right, EMNGD solves the energy-metric equation in $T_{x_k}\mathcal M$, retracts $-\alpha_k\eta_{x_k}$ to $x_{k+1}\in\mathcal M$, and repeats the update toward $x^\ast$.}
    \label{fig:emngd_idea}
\end{figure}
We define the \emph{Hilbert} and \emph{energy Gram matrices} by 
\begin{align}\label{eq:gramHilbert}
    G_H(\theta)_{ij} \coloneqq \langle\partial_{\theta_i} u_\theta, \partial_{\theta_j} u_\theta\rangle_X, \quad 
\end{align}
and
\begin{equation}\label{eq:gramEnergy}
    G_E(\theta)_{ij} \coloneqq D^2E(u_\theta)(\partial_{\theta_i} u_\theta, \partial_{\theta_j} u_\theta).
\end{equation}

The \emph{Hilbert natural-gradient direction} $\nabla^H L(\theta) = G_H(\theta)^+\nabla L(\theta)$ \mzz uses the Sobolev inner product $\langle \cdot, \cdot\rangle_{X}$ for neural-network training~\citep{nurbekyan2022efficient}.
For a Sobolev space $X$, the direction is also called the \emph{Sobolev natural gradient}; H-NG denotes the \emph{Hilbert natural gradient}\eee.
Natural-gradient theory establishes\footnote{For regular and singular Gram matrices and finite-dimensional spaces, see~\citep{amari2016information, van2022invariance}.  The appendix gives an argument for infinite-dimensional spaces.} that
\footnote{\mzz Here, the Hilbert space gradient $\nabla E(u)\in X$ is the unique element satisfying $\langle \nabla E(u), v\rangle_X = DE(u)v$, where $DE$ denotes the Fréchet derivative.
\eee
}
\begin{equation}
    DP_\theta\nabla^HL(\theta) = \Pi_{T_\theta \mathcal F_\Theta}( \nabla E(u_\theta)).
\end{equation}

In words, following the natural gradient amounts to moving along the projection of the Hilbert space gradient onto the model's tangent space in function space.
The observation that identifying the function space gradient via the Hessian leads to a Newton update motivates the concept of energy natural gradients that we now introduce.

\begin{definition}[Energy Manifold Natural Gradient]\label{def:ENG}
Under Assumption~\ref{ass:basic_emngd}, the \emph{energy manifold natural gradient} at $x\in\mathcal M$ is the tangent vector $\eta_x$ solving~\eqref{eq:manifold_emngd_direction}.
The associated descent direction is $-\eta_x$, and the algorithmic update is the retracted step~\eqref{eq:def_emng_update}.
In the Euclidean parameter case $\mathcal M=\mathbb R^p$, $R_\theta(v)=\theta+v$, and $\lambda=0$, Definition~\ref{def:ENG} reduces to
\begin{equation}
    \nabla^E L(\theta) \coloneqq G_E(\theta)^+\nabla L(\theta),
\end{equation}
the usual energy natural gradient direction.
\end{definition}

\subsection{Scalable solvers for the EMNGD tangent system}

For a linear PDE operator $\mathcal L$, the residual 
yields a quadratic energy, and the energy Gram matrix takes the form

\begin{align}
    \begin{split}
        G_E(\theta)_{ij} 
        &= 
        \int_\Omega \mathcal L (\partial_{\theta_i}u_\theta) \mathcal L (\partial_{\theta_j}u_\theta) \mathrm dx 
        \\
        &+ 
        \tau \int_{\partial\Omega} \mathcal B (\partial_{\theta_i}u_\theta) \mathcal B (\partial_{\theta_j}u_\theta)  \mathrm ds
    \end{split}
\end{align}

The residual-energy expression also exposes the low-rank structure used by scalable implementations.
After quadrature, the residual loss can be written as
\begin{equation}
    L(\theta)=\frac12\|r(\theta)\|_2^2,\qquad r(\theta)\in\mathbb R^N,
\end{equation}
where the entries of $r$ collect the weighted interior and boundary residuals.
Let $J(\theta)=D_\theta r(\theta)\in\mathbb R^{N\times p}$ be the residual Jacobian.
For a linear PDE operator, $J^\top J$ is the exact pullback of the quadratic function-space Hessian.
For a nonlinear residual map, $J^\top J$ is the GGN pullback and omits residual-weighted second-derivative terms from the parameter Hessian.
Then
\begin{equation}\label{eq:residual_jacobian_energy}
    G_E(\theta)=J(\theta)^\top J(\theta),\qquad
    \nabla L(\theta)=J(\theta)^\top r(\theta).
\end{equation}

The damped EMNGD direction satisfies
\begin{equation}\label{eq:emngd_damped_primal}
    \nabla^E_\lambda L(\theta)
    =
    \big(J^\top J+\lambda I\big)^{-1}J^\top r,
\end{equation}
where $J=J(\theta)$ and $r=r(\theta)$.
Applying the push-through identity, equivalently the Woodbury matrix identity, gives the sample-space form
\begin{equation}\label{eq:emngd_woodbury}
    \big(J^\top J+\lambda I\big)^{-1}J^\top r
    =
    J^\top\big(JJ^\top+\lambda I\big)^{-1}r.
\end{equation}

For an embedded parameter manifold, let $\mathbf J_x:\mathbb R^p\to\mathbb R^N$ be the ambient residual Jacobian.  Let $\Pi_x:\mathbb R^p\to T_x\mathcal M$ be the orthogonal projector onto the tangent space.
The corresponding intrinsic direction is
\begin{equation}\label{eq:emngd:mopt:manifold-woodbury}
    \eta_x=\Pi_x\mathbf J_x^\top
    \big(\mathbf J_x\Pi_x\mathbf J_x^\top+\lambda I\big)^{-1}r(x).
\end{equation}

The Woodbury identity computes the same EMNGD tangent direction from an $N\times N$ system rather than a $p\times p$ system.
The reduction is useful when quadrature or collocation samples are far fewer than trainable parameters.  The dominant solve then occurs in sample space.
The matrix $JJ^\top$ is the energy analogue of the empirical neural tangent kernel~\citep{jacot2018neural}.  Efficient kernel construction techniques can be used without changing the EMNGD geometry~\citep{novak2022fast}.

\paragraph{Exact Woodbury duality and Nystr\"om sketches.}
Let $\mathcal J_x=D r_x:T_x\mathcal M\to\mathbb R^N$ denote the residual differential, and let $\mathcal J_x^*$ be the adjoint induced by $g_x^0$ and the Euclidean inner product.
The residual gradient is $\operatorname{grad}^0F(x)=\mathcal J_x^*r(x)$.
For the quadratic residual energy or the corresponding GGN metric, the intrinsic damped direction and sample-space kernel are
\begin{equation}\label{eq:intrinsic_residual_direction}
    \eta_x=(\mathcal J_x^*\mathcal J_x+\lambda I_x)^{-1}\mathcal J_x^*r(x),
    \qquad K_x=\mathcal J_x\mathcal J_x^*.
\end{equation}

The push-through identity gives $\eta_x=\mathcal J_x^*(K_x+\lambda I)^{-1}r(x)$.  Woodbury is an exact dual representation of the damped tangent direction.  In embedded coordinates, $\mathcal J_x=\mathbf J_x\Pi_x$, which recovers~\eqref{eq:emngd:mopt:manifold-woodbury}.

\begin{proposition}[Nystr\"om direction error]\label{prop:nystrom_direction_error}
Let $\widehat K_x\succeq0$ be a rank-$\ell$ Nystr\"om approximation of $K_x$, and define
\[
    \widetilde\eta_x=\mathcal J_x^*(\widehat K_x+\lambda I)^{-1}r(x),
    \qquad \lambda>0.
\]

Then
\begin{equation}\label{eq:nystrom_direction_error}
    \|\widetilde\eta_x-\eta_x\|_0
    \leq
    \frac{\|\mathcal J_x\|_{0\to2}\,\|K_x-\widehat K_x\|_2}{\lambda^2}
    \|r(x)\|_2.
\end{equation}
\end{proposition}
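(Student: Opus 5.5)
The plan is to begin from the exact Woodbury representation stated just before the proposition. The push-through identity gives $\eta_x=\mathcal J_x^*(K_x+\lambda I)^{-1}r(x)$. Subtracting this from the definition of $\widetilde\eta_x$ and using linearity of $\mathcal J_x^*$ yields
\[
\widetilde\eta_x-\eta_x=\mathcal J_x^*\Big[(\widehat K_x+\lambda I)^{-1}-(K_x+\lambda I)^{-1}\Big]r(x).
\]
The whole comparison therefore takes place in sample space $\mathbb R^N$. The tangent space enters only through the single adjoint factor $\mathcal J_x^*$ on the left.

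Next I would apply the second resolvent identity $B^{-1}-A^{-1}=B^{-1}(A-B)A^{-1}$ with $A=K_x+\lambda I$ and $B=\widehat K_x+\lambda I$. This gives
\[
\widetilde\eta_x-\eta_x=\mathcal J_x^*(\widehat K_x+\lambda I)^{-1}(K_x-\widehat K_x)(K_x+\lambda I)^{-1}r(x).
\]
Both $K_x=\mathcal J_x\mathcal J_x^*$ and $\widehat K_x$ are symmetric positive semidefinite; the first is PSD as a Gram operator, the second by hypothesis. Hence each shifted matrix has all eigenvalues at least $\lambda$, and its inverse has spectral norm at most $1/\lambda$. This is the only place where $\widehat K_x\succeq0$ and $\lambda>0$ are used, and it accounts for the factor $\lambda^{-2}$.

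Finally I would take norms, using $\|\cdot\|_0$ on $T_x\mathcal M$ and $\|\cdot\|_2$ on $\mathbb R^N$, and apply submultiplicativity. The one point needing care is the operator norm of the adjoint. Since $\mathcal J_x^*$ is the adjoint for $g_x^0$ and the Euclidean inner product, $\|\mathcal J_x^*\|_{2\to0}=\|\mathcal J_x\|_{0\to2}$, by the standard sup characterization $\sup\langle \mathcal J_x\xi,y\rangle$ over unit $\xi,y$. Combining the four factors gives exactly \eqref{eq:nystrom_direction_error}.

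I expect no real obstacle here. The step most worth checking is that the bound does not secretly require a sharper estimate, such as $\|\mathcal J_x^*(\widehat K_x+\lambda I)^{-1}\|\le 1/(2\sqrt\lambda)$. It does not, because the claimed bound is the cruder product estimate.
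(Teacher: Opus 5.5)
Your proposal is correct and follows the paper's proof: the Woodbury representation of $\eta_x$, the resolvent identity, the $\lambda^{-1}$ bound on both shifted inverses from positive semidefiniteness, and the operator-norm estimate, where you usefully make explicit the identity $\|\mathcal J_x^*\|_{2\to0}=\|\mathcal J_x\|_{0\to2}$ that the paper leaves implicit. A small correction to your aside: the $1/(2\sqrt\lambda)$ estimate holds for $\mathcal J_x^*(K_x+\lambda I)^{-1}$ because $K_x=\mathcal J_x\mathcal J_x^*$, but not in general for $\mathcal J_x^*(\widehat K_x+\lambda I)^{-1}$; since you never use it, the argument is unaffected.
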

\begin{proof}
The resolvent identity gives
\[
 (\widehat K_x+\lambda I)^{-1}-(K_x+\lambda I)^{-1}
 =(\widehat K_x+\lambda I)^{-1}(K_x-\widehat K_x)(K_x+\lambda I)^{-1}.
\]

The positive damping is essential: positive semidefiniteness bounds both inverse norms by $\lambda^{-1}$.
Without $\lambda>0$, the inverse norm can diverge when $K_x$ or $\widehat K_x$ is singular.
Applying $\mathcal J_x^*$ and the operator-norm bound gives~\eqref{eq:nystrom_direction_error}.
\end{proof}

A Riemannian Nystr\"om construction can also approximate the tangent Gram operator $\mathcal G_x=\mathcal J_x^*\mathcal J_x$ directly.  For a rank-$\ell$ tangent sketch $\mathcal P_x$, the intrinsic approximation is
\[
    \widehat{\mathcal G}_x=
    \mathcal G_x\mathcal P_x
    (\mathcal P_x^*\mathcal G_x\mathcal P_x)^\dagger
    \mathcal P_x^*\mathcal G_x.
\]

The intrinsic approximation is $g_x^0$-self-adjoint, positive semidefinite, and has rank at most $\ell$~\citep{nie2026nystrom}.
EMNGD instead sketches the dual kernel $K_x$ after the exact Woodbury transformation.
A sketch-and-solve update uses $\widetilde\eta_x$ and is approximate.  Equation~\eqref{eq:nystrom_direction_error} bounds the resulting direction error.
A Nystr\"om preconditioner changes only the conditioning of an iterative solve of $(K_x+\lambda I)a=r(x)$.
After convergence, $\eta_x=\mathcal J_x^*a$ is the exact Woodbury direction.

Equivalently,~\eqref{eq:emngd_damped_primal} is the solution of the Tikhonov-regularized least-squares problem
\begin{equation}\label{eq:emngd_tikhonov}
    \nabla^E_\lambda L(\theta)
    =
    \arg\min_{\psi\in\mathbb R^p}
    \left\{\frac12\|J\psi-r\|_2^2+\frac{\lambda}{2}\|\psi\|_2^2\right\}.
\end{equation}

The first-order condition of~\eqref{eq:emngd_tikhonov} is
\[
    (J^\top J+\lambda I)\psi=J^\top r,
\]
which gives~\eqref{eq:emngd_damped_primal}; the identity
\[
    (J^\top J+\lambda I)J^\top=J^\top(JJ^\top+\lambda I)
\]
then gives~\eqref{eq:emngd_woodbury}.
If $r\in\operatorname{range}(J)$, singular-value decomposition shows that $\nabla^E_\lambda L(\theta)\to J^+r$ as $\lambda\downarrow0$.  The damped direction then converges to the minimum-norm residual-matching direction.
SPRING-style momentum replaces the regularization center with a previous direction~\citep{goldshlager2024kaczmarz}.  The tangent space, energy metric, and retraction remain unchanged.

For a quadratic energy $E(u) = \frac12 a(u,u) - f(u)$, the deep Ritz method uses a symmetric and coercive bilinear form $a$ and $f\in X^*$, which gives
\begin{equation}
    G_E(\theta)_{ij} = a(\partial_{\theta_i} u_\theta, \partial_{\theta_j}u_\theta).
\end{equation}

\section{Theoretical Analysis}\label{sec:theory}

The following theorem is the central structural statement of EMNGD.
The theorem states that the EMNGD vector is the best feasible approximation to the function-space Newton correction under the energy metric.
\begin{theorem}[Main theorem: energy-manifold Newton projection]\label{thm:main_thm}
Assume that $D^2E(P(x))$ is symmetric, bounded, and coercive on $X$.
Let $H_x=D^2E(P(x))$ and $N_x\in X$ be the function-space Newton vector defined by
\begin{equation}\label{eq:newton_vector}
    H_x[N_x,v]=DE(P(x))[v]\qquad\text{for all }v\in X.
\end{equation}

Let $S_x=J_x(T_x\mathcal M)\subset X$.
If the undamped EMNGD equation has a solution, the push-forward satisfies
\begin{equation}\label{eq:pushforwardENGNewton}
    J_x\eta_x=\Pi_{S_x}^{H_x}N_x,
\end{equation}
where $\Pi_{S_x}^{H_x}$ is the $H_x$-orthogonal projection onto $S_x$.
In Euclidean coordinates, the projection identity becomes
\[
    DP_\theta\nabla^EL(\theta)
    =
    \Pi_{T_\theta \mathcal F_\Theta}^{D^2E(u_\theta)}
    \big(D^2E(u_\theta)^{-1}\nabla E(u_\theta)\big).
\]
\end{theorem}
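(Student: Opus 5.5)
The plan is to identify $H_x$ as an equivalent inner product on $X$ and read the undamped EMNGD equation as the Galerkin (normal) equations for the $H_x$-orthogonal projection onto $S_x$.

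\emph{Step 1: $H_x$ is an inner product.} Since $H_x$ is symmetric, bounded and coercive, $\langle v,w\rangle_{H_x}:=H_x[v,w]$ is an inner product on $X$ whose norm is equivalent to $\|\cdot\|_X$. Hence $(X,\langle\cdot,\cdot\rangle_{H_x})$ is a Hilbert space. Two consequences follow.
\begin{itemize}
\item The Newton vector $N_x$ in~\eqref{eq:newton_vector} exists and is unique by Riesz (or Lax--Milgram), because $DE(P(x))$ is a bounded linear functional on $X$.
\item The subspace $S_x=J_x(T_x\mathcal M)$ is finite dimensional, hence closed. So $\Pi_{S_x}^{H_x}$ is well defined, and $\Pi_{S_x}^{H_x}N_x$ is characterized as the unique $s\in S_x$ with $H_x[N_x-s,J_x\zeta]=0$ for all $\zeta\in T_x\mathcal M$.
\end{itemize}

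\emph{Step 2: chain rule and testing with pushed-forward directions.} For $\zeta\in T_x\mathcal M$ the chain rule gives $dF_x[\zeta]=DE(P(x))[J_x\zeta]$. Testing~\eqref{eq:newton_vector} with $v=J_x\zeta$ then gives $dF_x[\zeta]=H_x[N_x,J_x\zeta]$.

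The undamped equation~\eqref{eq:manifold_emngd_direction} with $\lambda=0$ reads $H_x[J_x\eta_x,J_x\zeta]=dF_x[\zeta]$ for all $\zeta$. Subtracting the two identities yields
\[
H_x[N_x-J_x\eta_x,\,J_x\zeta]=0\qquad\text{for all }\zeta\in T_x\mathcal M,
\]
while $J_x\eta_x\in S_x$ by construction. By the characterization in Step 1, $J_x\eta_x=\Pi_{S_x}^{H_x}N_x$.

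Any two solutions of the undamped equation therefore have the same push-forward, even though $\eta_x$ itself may be non-unique when $J_x$ has a kernel. This is the subtle point to state explicitly: the theorem is a claim about $J_x\eta_x$, not $\eta_x$. It is also why existence of a solution is assumed. In finite dimensions existence actually always holds, because the right-hand side $\zeta\mapsto H_x[N_x,J_x\zeta]$ vanishes on $\ker(J_x^*H_xJ_x)=\ker J_x$, using coercivity. I would note this but keep the hypothesis as stated.

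\emph{Step 3: Euclidean case.} Take $\mathcal M=\mathbb R^p$ with the identity chart, as in Theorem~\ref{thm:coordinate_invariance}. Then $S_x=T_\theta\mathcal F_\Theta$, and the coordinate system is $G_E(\theta)d=\nabla L(\theta)$. The pseudoinverse solution $d=G_E^+\nabla L$ solves it, since $\nabla L\in\operatorname{range}(G_E)$ by the kernel argument above.

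It remains to write $N_x=D^2E(u_\theta)^{-1}\nabla E(u_\theta)$, interpreting $D^2E(u_\theta)$ as the operator $X\to X$ represented through $\langle\cdot,\cdot\rangle_X$ and $\nabla E$ as the Riesz gradient defined in the footnote. This operator is invertible by coercivity. Step 2 then gives the displayed identity.

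The main obstacle is bookkeeping rather than analysis: justifying the operator reinterpretation $D^2E^{-1}\nabla E$ and handling the possibly singular Gram matrix. Once $H_x$ is viewed as an inner product, the argument is just Galerkin orthogonality.
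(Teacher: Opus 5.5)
Your proposal is correct and follows the paper's proof. Both use the chain rule $dF_x[\zeta]=DE(P(x))[J_x\zeta]$, test the Newton-vector equation with $v=J_x\zeta$, and subtract from the undamped EMNGD equation to get $H_x[N_x-J_x\eta_x,J_x\zeta]=0$. Both then invoke the Hilbert-space characterization of the $H_x$-orthogonal projection onto $S_x$.

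Your extra bookkeeping fills in details the paper leaves implicit. This covers well-posedness of $N_x$ and of the projection onto the finite-dimensional $S_x$, and automatic solvability from $\ker(J_x^*H_xJ_x)=\ker J_x$. It also covers the Riesz identification $N_x=D^2E(u_\theta)^{-1}\nabla E(u_\theta)$ behind the Euclidean display.
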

\begin{proof}
The chain rule gives $dF_x[\zeta]=DE(P(x))[J_x\zeta]$.
The definition of $N_x$ gives $H_x[N_x,J_x\zeta]$.
The undamped EMNGD equation gives
\[
    H_x[J_x\eta_x,J_x\zeta]=H_x[N_x,J_x\zeta]
    \qquad\text{for all }\zeta\in T_x\mathcal M.
\]

Equivalently, $N_x-J_x\eta_x$ is $H_x$-orthogonal to every vector in $S_x$, while $J_x\eta_x\in S_x$.
The relation is precisely the Hilbert-space characterization of the orthogonal projection.
\end{proof}

\begin{corollary}[Quadratic energies]\label{cor:quadratic_energy_projection}
Let $E(u)=\frac12 a(u,u)-\ell(u)+c$, where $a:X\times X\to\mathbb R$ is symmetric, bounded, and coercive, and $\ell\in X^*$.
If $u^*$ is the unique minimizer, equivalently $a(u^*,v)=\ell(v)$ for all $v\in X$, then the undamped EMNGD direction satisfies
\begin{equation}\label{eq:pushforwardENG}
    J_x\eta_x=\Pi^a_{J_x(T_x\mathcal M)}(P(x)-u^*).
\end{equation}

The natural-gradient vector is the projected error $P(x)-u^*$, and the descent update $R_x(-\alpha\eta_x)$ moves toward the projected correction $u^*-P(x)$.
\end{corollary}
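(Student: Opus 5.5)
The plan is to specialize Theorem~\ref{thm:main_thm} to the quadratic energy and identify the Newton vector explicitly. First I compute the derivatives of $E(u)=\frac12 a(u,u)-\ell(u)+c$. Symmetry of $a$ gives
\[
DE(u)[v]=a(u,v)-\ell(v), \qquad D^2E(u)[v,w]=a(v,w).
\]
The Hessian is then independent of $u$, and $H_x=a$ at every $x$. By hypothesis $a$ is symmetric, bounded, and coercive, so the assumptions of Theorem~\ref{thm:main_thm} hold at every $x\in\mathcal M$. Coercivity also makes $a$ an inner product equivalent to $\langle\cdot,\cdot\rangle_X$, so $S_x=J_x(T_x\mathcal M)$, being finite-dimensional and hence closed, admits a well-defined $a$-orthogonal projection $\Pi^a_{S_x}=\Pi^{H_x}_{S_x}$.

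Next I identify $N_x$. Substituting $\ell(v)=a(u^*,v)$ from the characterization of $u^*$ gives
\[
DE(P(x))[v]=a(P(x),v)-a(u^*,v)=a(P(x)-u^*,v)\quad\text{for all } v\in X.
\]
Hence $N_x'=P(x)-u^*$ satisfies~\eqref{eq:newton_vector}. By Lax--Milgram, equivalently Riesz in the $a$-inner product, the Newton vector is unique, so $N_x=P(x)-u^*$. Inserting this into~\eqref{eq:pushforwardENGNewton} yields~\eqref{eq:pushforwardENG}. I will state explicitly that this requires the undamped EMNGD equation to be solvable, exactly as in Theorem~\ref{thm:main_thm}. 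Solvability in fact always holds here: by Riesz on the finite-dimensional space $S_x$ with the $a$-inner product, the equation $a(J_x\eta,J_x\zeta)=a(N_x,J_x\zeta)$ has a solution, since $\Pi^a_{S_x}N_x\in S_x$ has a preimage under $J_x$.

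For the interpretive sentence, linearity of the projection gives $J_x(-\eta_x)=\Pi^a_{S_x}(u^*-P(x))$. By $DR_x(0_x)=\mathrm{id}$ and the chain rule,
\[
P(R_x(-\alpha\eta_x))=P(x)-\alpha J_x\eta_x+o(\alpha).
\]
To first order, the realized function therefore moves along the $a$-projected correction toward $u^*$; for $\alpha=1$ this is the best approximation of $u^*-P(x)$ in $S_x$.

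The main obstacle is modest. It is the uniqueness of $N_x$ together with the well-posedness of the projection onto $S_x$, and both follow from coercivity and the finite dimensionality of $S_x$. The rest is substitution into Theorem~\ref{thm:main_thm}.
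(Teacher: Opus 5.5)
Your proposal is correct and follows the paper's proof: you write $DE(u)[v]=a(u-u^*,v)$, note that $H_x=a$ so the Newton vector is $N_x=P(x)-u^*$, and substitute into Theorem~\ref{thm:main_thm}. Your additional points are valid refinements that the paper leaves implicit: uniqueness of $N_x$ by coercivity, automatic solvability of the undamped equation via a preimage of $\Pi^a_{S_x}N_x$ under $J_x$, and the first-order expansion $P(R_x(-\alpha\eta_x))=P(x)-\alpha J_x\eta_x+o(\alpha)$.
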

\begin{proof}
For a quadratic energy, $DE(u)[v]=a(u,v)-\ell(v)=a(u-u^*,v)$.
The Newton vector in the $a$-inner product is $u-u^*$.
The claim follows from Theorem~\ref{thm:main_thm} with $u=P(x)$ and $H_x=a$.
\end{proof}

\begin{proposition}[Damping as regularized projection]\label{prop:damped_regularized_projection}
Under the assumptions of Theorem~\ref{thm:main_thm}, let $N_x$ be the function-space Newton vector.
For $\lambda>0$, the damped EMNGD direction is the unique minimizer of
\begin{equation}\label{eq:regularized_projection_problem}
    \min_{\xi\in T_x\mathcal M}
    \left\{
       \frac12\|N_x-J_x\xi\|_{H_x}^2+\frac{\lambda}{2}\|\xi\|_0^2
    \right\}.
\end{equation}

Damping turns the exact tangent-space projection of the Newton vector into a Tikhonov-regularized tangent-space projection.
\end{proposition}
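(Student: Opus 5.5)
The plan is to expand the objective in \eqref{eq:regularized_projection_problem} and identify it, up to an additive constant, with the quadratic functional $Q_x$ from Proposition~\ref{prop:operator_variational}. That proposition already shows $Q_x$ has a unique minimizer, namely the damped EMNGD direction. Here $\|v\|_{H_x}^2=H_x[v,v]$ denotes the energy norm; it is a genuine norm on $X$ because $H_x$ is symmetric, bounded and coercive. Likewise $\|\xi\|_0^2=g_x^0(\xi,\xi)$.

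\textbf{Step 1 (expansion).} For $\xi\in T_x\mathcal M$, bilinearity and symmetry of $H_x$ give
\[
\tfrac12\|N_x-J_x\xi\|_{H_x}^2+\tfrac{\lambda}{2}\|\xi\|_0^2
=\tfrac12 H_x[N_x,N_x]-H_x[N_x,J_x\xi]+\tfrac12 H_x[J_x\xi,J_x\xi]+\tfrac{\lambda}{2}g_x^0(\xi,\xi).
\]

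\textbf{Step 2 (identification).} By the definition \eqref{eq:newton_vector} of $N_x$ and the chain rule, exactly as in the proof of Theorem~\ref{thm:main_thm},
\[
H_x[N_x,J_x\xi]=DE(P(x))[J_x\xi]=dF_x[\xi].
\]
By \eqref{eq:pullback_energy_metric} and \eqref{eq:damped_energy_metric}, the last two terms equal $\tfrac12 g_x^{E,\lambda}(\xi,\xi)$. Hence the objective equals $Q_x(\xi)+\tfrac12 H_x[N_x,N_x]$, and the added term does not depend on $\xi$.

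\textbf{Step 3 (conclusion).} Coercivity implies $H_x$ is positive semidefinite, so Proposition~\ref{prop:operator_variational} applies with $\lambda>0$. It gives that $Q_x$ is strictly convex on the finite-dimensional space $T_x\mathcal M$, with unique minimizer $\eta_x$ characterized by \eqref{eq:manifold_emngd_direction}. Adding a constant changes neither the minimizer nor its uniqueness, so $\eta_x$ is the unique minimizer of \eqref{eq:regularized_projection_problem}.

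As an independent check, the first-order condition of the expanded objective reads
\[
H_x[J_x\xi-N_x,J_x\zeta]+\lambda g_x^0(\xi,\zeta)=0\quad\text{for all }\zeta,
\]
which is exactly \eqref{eq:manifold_emngd_direction}. Setting $\lambda=0$ recovers the orthogonality relation of Theorem~\ref{thm:main_thm}, which justifies calling the damped problem a Tikhonov-regularized projection.

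\textbf{Main obstacle.} There is no real analytic difficulty. The only points needing care are that $N_x$ exists, which follows from Lax--Milgram given a bounded, symmetric, coercive $H_x$ and the bounded functional $DE(P(x))$, and that $\|\cdot\|_{H_x}$ is well defined. Both come directly from the hypotheses of Theorem~\ref{thm:main_thm}.
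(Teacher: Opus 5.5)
Your proof is correct and matches the paper's argument. The paper derives the same stationarity condition $H_x[J_x\xi,J_x\zeta]+\lambda g_x^0(\xi,\zeta)=H_x[N_x,J_x\zeta]=dF_x[\zeta]$, which is your ``independent check,'' and then appeals to strict convexity for $\lambda>0$. Your expansion into $Q_x(\xi)+\tfrac12 H_x[N_x,N_x]$ rests on the same identity and simply lets uniqueness follow directly from Proposition~\ref{prop:operator_variational}.
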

\begin{proof}
Differentiating the objective in~\eqref{eq:regularized_projection_problem} in the direction $\zeta$ gives the stationarity condition
\[
    H_x[J_x\xi,J_x\zeta]+\lambda g_x^0(\xi,\zeta)
    =H_x[N_x,J_x\zeta]=dF_x[\zeta],
\]
which is exactly~\eqref{eq:manifold_emngd_direction}.
Strict convexity follows from $\lambda>0$.
\end{proof}

Equations~\eqref{eq:pushforwardENGNewton} and~\eqref{eq:pushforwardENG} link parameter-space energy NG to a function-space Newton update.  For quadratic energies, the function-space natural-gradient vector is the current error $P(x)-u^*$.

\begin{proposition}[Descent direction]\label{prop:descent_direction}
Assume $\lambda>0$ and $\operatorname{grad}^0F(x)\neq0$.
Let $\eta_x$ be the EMNGD direction.
Then the retraction curve $\gamma(\alpha)=R_x(-\alpha\eta_x)$ satisfies
\begin{equation}\label{eq:descent_derivative}
    \left.\frac{d}{d\alpha}F(\gamma(\alpha))\right|_{\alpha=0}
    =-g_x^{E,\lambda}(\eta_x,\eta_x)<0.
\end{equation}
\end{proposition}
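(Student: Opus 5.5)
The plan is a chain-rule computation at $\alpha=0$, followed by substituting the defining equation~\eqref{eq:manifold_emngd_direction} with the test vector chosen as $\eta_x$ itself. Strictness then comes from Proposition~\ref{prop:operator_variational}.

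First, compute the velocity of the retraction curve. By Assumption~\ref{ass:basic_emngd}, $\gamma(0)=R_x(0_x)=x$. Since $\gamma(\alpha)=R_x(-\alpha\eta_x)$ is the composition of the linear curve $\alpha\mapsto-\alpha\eta_x$ in $T_x\mathcal M$ with $R_x$, the chain rule gives $\gamma'(0)=DR_x(0_x)[-\eta_x]=-\eta_x$. Here we use the retraction property $DR_x(0_x)=\mathrm{id}_{T_x\mathcal M}$. The composition $F=E\circ P$ is differentiable, so a second application of the chain rule yields
\[
\left.\frac{d}{d\alpha}F(\gamma(\alpha))\right|_{\alpha=0}=dF_x[\gamma'(0)]=-dF_x[\eta_x].
\]
Only first-order smoothness of $R$ near $0_x$ is needed. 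Twice-differentiability is assumed anyway, so this step should be routine. The only point to check is that $R_x$ is differentiable at $0_x$ as a map on the vector space $T_x\mathcal M$.

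Next, take $\zeta=\eta_x$ in~\eqref{eq:manifold_emngd_direction}. This gives $dF_x[\eta_x]=g_x^{E,\lambda}(\eta_x,\eta_x)$, and hence the identity~\eqref{eq:descent_derivative}.

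For strictness, the positive-semidefiniteness hypothesis of Proposition~\ref{prop:operator_variational} must hold, namely that $D^2E(P(x))$ is positive semidefinite. This is implicit in the setting where EMNGD is well posed; I will state it explicitly, or replace it with coercivity as in Theorem~\ref{thm:main_thm}. Under that hypothesis, $g_x^{E,\lambda}(\xi,\xi)\ge\lambda g_x^0(\xi,\xi)$. It then remains to show $\eta_x\neq0$. Proposition~\ref{prop:operator_variational} gives $A_x^\lambda\eta_x=\operatorname{grad}^0F(x)\neq0$, so $\eta_x=0$ is impossible. Therefore $g_x^{E,\lambda}(\eta_x,\eta_x)\ge\lambda\|\eta_x\|_0^2>0$.

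The main obstacle is this semidefiniteness requirement. Without it, $g_x^{E,\lambda}$ could be indefinite for small $\lambda$, and the sign would fail. I will therefore make the assumption explicit, for instance when the GGN curvature is used.
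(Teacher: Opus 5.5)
Your proposal is correct and follows the paper's proof: the retraction property gives $\gamma'(0)=-\eta_x$, the chain rule together with testing~\eqref{eq:manifold_emngd_direction} at $\zeta=\eta_x$ yields the identity, and positive definiteness of $g_x^{E,\lambda}$ combined with $\eta_x\neq0$ gives the strict sign. You are right to state explicitly that $D^2E(P(x))$ must be positive semidefinite. The paper's proof relies on the positive definiteness from Proposition~\ref{prop:operator_variational} but does not restate that hypothesis in the statement, and the strict inequality can fail without it.
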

\begin{proof}
Since $R$ is a retraction, $\gamma'(0)=DR_x(0_x)[-\eta_x]=-\eta_x$.
The chain rule and~\eqref{eq:manifold_emngd_direction} give
\[
    \left.\frac{d}{d\alpha}F(\gamma(\alpha))\right|_{\alpha=0}
    =dF_x[-\eta_x]
    =-g_x^{E,\lambda}(\eta_x,\eta_x).
\]

Positive definiteness for $\lambda>0$ and $\operatorname{grad}^0F(x)\neq0$ imply $\eta_x\neq0$, so the derivative is strictly negative.
\end{proof}

\begin{assumption}[Uniform metric equivalence and retraction smoothness]\label{ass:convergence_emngd}
Let $\Omega=\{x\in\mathcal M:F(x)\leq F(x_0)\}$.
Assume that $F$ is bounded below on $\Omega$ and that there exist constants $0<m\leq M<\infty$ such that
\begin{equation}\label{eq:metric_equivalence}
    m\|\xi\|_0^2\leq g_x^{E,\lambda}(\xi,\xi)\leq M\|\xi\|_0^2
    \qquad (x\in\Omega,\ \xi\in T_x\mathcal M).
\end{equation}

Assume also that there exists $L_R>0$ such that every trial step considered by the line search satisfies
\begin{equation}\label{eq:retraction_smoothness}
    F(R_x(s))\leq F(x)+dF_x[s]+\frac{L_R}{2}\|s\|_0^2.
\end{equation}
\end{assumption}

\begin{theorem}[Global first-order convergence with Armijo line search]\label{thm:global_convergence}
Suppose Assumptions~\ref{ass:basic_emngd} and~\ref{ass:convergence_emngd} hold with $\lambda>0$.
At iteration $k$, compute the exact EMNGD direction $\eta_k=(A_{x_k}^\lambda)^{-1}\operatorname{grad}^0F(x_k)$.
Choose $\alpha_k$ by backtracking from $\alpha_0>0$ with contraction factor $\beta\in(0,1)$ until
\begin{equation}\label{eq:armijo}
    F(R_{x_k}(-\alpha_k\eta_k))
    \leq F(x_k)-c\alpha_k\,dF_{x_k}[\eta_k],
\end{equation}
holds for some $c\in(0,1)$.
Then the line search terminates, the iterates remain in $\Omega$, and
\begin{equation}\label{eq:grad_to_zero}
    \|\operatorname{grad}^0F(x_k)\|_0\to0.
\end{equation}

Every accumulation point is a first-order stationary point.
\end{theorem}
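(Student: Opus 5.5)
The argument follows the standard Armijo analysis for Riemannian descent methods (Absil–Mahony–Sepulchre style), adapted to the energy metric through the equivalence constants $m,M$ of Assumption~\ref{ass:convergence_emngd}.

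\textbf{Step 1: two inequalities linking the direction to the gradient.} Set $\mathrm{g}_k=\operatorname{grad}^0F(x_k)$. By \eqref{eq:manifold_emngd_direction}, the directional derivative is
\[
dF_{x_k}[\eta_k]=g^{E,\lambda}_{x_k}(\eta_k,\eta_k)\ge m\|\eta_k\|_0^2 .
\]
Testing \eqref{eq:manifold_emngd_direction} with $\zeta=\mathrm g_k$ and applying Cauchy--Schwarz in the inner product $g^{E,\lambda}$ gives
\[
\|\mathrm g_k\|_0^2=g^{E,\lambda}(\eta_k,\mathrm g_k)\le g^{E,\lambda}(\eta_k,\eta_k)^{1/2}\,M^{1/2}\|\mathrm g_k\|_0 .
\]
Hence
\[
dF_{x_k}[\eta_k]\ge \|\mathrm g_k\|_0^2/M .
\]
Equivalently, $A^\lambda_{x_k}$ has spectrum in $[m,M]$, so its inverse has spectrum in $[1/M,1/m]$. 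This also gives $\|\eta_k\|_0\le \|\mathrm g_k\|_0/m$. These bounds are valid as long as $x_k\in\Omega$.

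\textbf{Step 2: termination of the line search and uniform lower bound on the step.} Apply \eqref{eq:retraction_smoothness} with $s=-\alpha\eta_k$:
\[
F(R_{x_k}(-\alpha\eta_k))\le F(x_k)-\alpha\, dF[\eta_k]+\tfrac{L_R}{2}\alpha^2\|\eta_k\|_0^2 .
\]
Armijo's condition \eqref{eq:armijo} holds whenever $\tfrac{L_R}{2}\alpha\|\eta_k\|_0^2\le(1-c)\,dF[\eta_k]$. Using $\|\eta_k\|_0^2\le dF[\eta_k]/m$, a sufficient condition is $\alpha\le \bar\alpha:=2(1-c)m/L_R$. Backtracking therefore stops after finitely many contractions, with
\[
\alpha_k\ge \alpha_{\min}:=\min\{\alpha_0,\beta\bar\alpha\}>0 .
\]
If $\mathrm g_k=0$, then $\eta_k=0$ and the method is already stationary. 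Since Armijo makes $F(x_{k+1})\le F(x_k)$, induction gives $x_k\in\Omega$ for all $k$. This is what justifies using the constants $m,M$ at every iterate.

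\textbf{Step 3: summing the decreases.} Combining Armijo with Step 1,
\[
F(x_k)-F(x_{k+1})\ge c\,\alpha_{\min}\|\mathrm g_k\|_0^2/M .
\]
Summing over $k$ telescopes the left side. Because $F$ is bounded below on $\Omega$, this yields $\sum_k\|\mathrm g_k\|_0^2<\infty$, and therefore \eqref{eq:grad_to_zero}.

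\textbf{Step 4: accumulation points.} Let $x_{k_j}\to\bar x$. Continuity of $x\mapsto\|\operatorname{grad}^0F(x)\|_0$ on the smooth manifold, which holds because $F=E\circ P$ is $C^1$, gives $\operatorname{grad}^0F(\bar x)=0$.

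\textbf{Main obstacle.} The delicate point is Step 2. The bound \eqref{eq:retraction_smoothness} is only assumed for trial steps the line search actually considers, and $m$ must hold at $x_k$. Both are secured by keeping the iterates inside $\Omega$ via the monotone decrease. The step-length bound then needs only the lower constant $m$, while the upper constant $M$ is used solely to convert the decrease into a gradient norm.
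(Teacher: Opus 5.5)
Your proposal is correct and follows essentially the same route as the paper's proof: the bounds $dF_{x_k}[\eta_k]\ge m\|\eta_k\|_0^2$ and $dF_{x_k}[\eta_k]\ge\|\operatorname{grad}^0F(x_k)\|_0^2/M$, retraction smoothness giving Armijo acceptance for $\alpha\le 2(1-c)m/L_R$ and hence a uniform lower bound on $\alpha_k$, telescoping the resulting sufficient decrease, and continuity of the gradient at accumulation points. You also make explicit two points the paper leaves implicit: the induction that keeps $x_k\in\Omega$ (so that $m$ and $M$ apply at every iterate), and the concrete step bound $\alpha_k\ge\min\{\alpha_0,\beta\bar\alpha\}$.
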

\begin{proof}
Let $g_k=\operatorname{grad}^0F(x_k)$ and $A_k=A_{x_k}^\lambda$.
The metric bounds imply
\[
    dF_{x_k}[\eta_k]=g_{x_k}^{E,\lambda}(\eta_k,\eta_k)
    \geq m\|\eta_k\|_0^2,
    \qquad
    dF_{x_k}[\eta_k]
    =g_{x_k}^0(g_k,A_k^{-1}g_k)
    \geq \frac1M\|g_k\|_0^2.
\]

Using~\eqref{eq:retraction_smoothness} with $s=-\alpha\eta_k$ gives
\[
    F(R_{x_k}(-\alpha\eta_k))
    \leq F(x_k)-\alpha\left(1-\frac{L_R\alpha}{2m}\right)dF_{x_k}[\eta_k].
\]

Every sufficiently small $\alpha$ satisfies~\eqref{eq:armijo}.  Backtracking terminates and returns a step bounded below by a positive constant.
The accepted steps yield $F(x_{k+1})\leq F(x_k)-C\|g_k\|_0^2$ for some $C>0$ independent of $k$.
Summing and using that $F$ is bounded below proves $\sum_k\|g_k\|_0^2<\infty$, so~\eqref{eq:grad_to_zero} holds.
Continuity of the Riemannian gradient gives stationarity of any accumulation point.
\end{proof}

\begin{proposition}[Inexact tangent solves]\label{prop:inexact_solve}
Let $\eta_x^*=(A_x^\lambda)^{-1}\operatorname{grad}^0F(x)$ be the exact direction.
Suppose an approximate direction $\widetilde\eta_x$ satisfies
\begin{equation}\label{eq:relative_energy_error}
    \|\widetilde\eta_x-\eta_x^*\|_{A_x^\lambda}
    \leq q\|\eta_x^*\|_{A_x^\lambda},
    \qquad q\in[0,1),
\end{equation}
where $\|\xi\|_{A_x^\lambda}^2=g_x^0(A_x^\lambda\xi,\xi)$.
Then
\begin{equation}\label{eq:inexact_descent}
    dF_x[\widetilde\eta_x]
    \geq (1-q)\|\eta_x^*\|_{A_x^\lambda}^2>0,
\end{equation}
whenever $\operatorname{grad}^0F(x)\neq0$.
As a result, $-\widetilde\eta_x$ remains a descent direction.
\end{proposition}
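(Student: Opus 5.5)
The plan is to write $\widetilde\eta_x=\eta_x^*+e$ and expand $dF_x[\widetilde\eta_x]$ in the $A_x^\lambda$ inner product. By \eqref{eq:manifold_emngd_direction} and \eqref{eq:operator_definition}, for every $\zeta\in T_x\mathcal M$,
\[
dF_x[\zeta]=g_x^{E,\lambda}(\eta_x^*,\zeta)=g_x^0(A_x^\lambda\eta_x^*,\zeta)=\langle\eta_x^*,\zeta\rangle_{A_x^\lambda}.
\]
Here $\langle\xi,\zeta\rangle_{A_x^\lambda}\coloneqq g_x^0(A_x^\lambda\xi,\zeta)$ is a genuine inner product on $T_x\mathcal M$: Proposition~\ref{prop:operator_variational} shows $A_x^\lambda$ is $g_x^0$-self-adjoint and positive definite for $\lambda>0$. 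So the descent quantity is simply an inner product against the exact direction, and the whole argument takes place in the finite-dimensional inner-product space $(T_x\mathcal M,\langle\cdot,\cdot\rangle_{A_x^\lambda})$.

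With $\zeta=\widetilde\eta_x=\eta_x^*+e$, the expansion gives
\[
dF_x[\widetilde\eta_x]=\|\eta_x^*\|_{A_x^\lambda}^2+\langle\eta_x^*,e\rangle_{A_x^\lambda}.
\]
Cauchy--Schwarz in the $A_x^\lambda$ inner product bounds the cross term below by $-\|\eta_x^*\|_{A_x^\lambda}\|e\|_{A_x^\lambda}$. The hypothesis \eqref{eq:relative_energy_error} gives $\|e\|_{A_x^\lambda}\le q\|\eta_x^*\|_{A_x^\lambda}$. Combining these yields $dF_x[\widetilde\eta_x]\ge(1-q)\|\eta_x^*\|_{A_x^\lambda}^2$, which is \eqref{eq:inexact_descent} without the strict inequality.

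For strict positivity, note that $\operatorname{grad}^0F(x)\neq0$ and invertibility of $A_x^\lambda$ give $\eta_x^*=(A_x^\lambda)^{-1}\operatorname{grad}^0F(x)\neq0$. Positive definiteness then gives $\|\eta_x^*\|_{A_x^\lambda}>0$, and $q<1$ makes the bound strictly positive. For the descent claim, repeat the retraction-curve computation of Proposition~\ref{prop:descent_direction} with $\widetilde\eta_x$ in place of $\eta_x$. Since $DR_x(0_x)=\mathrm{id}$, the derivative of $F(R_x(-\alpha\widetilde\eta_x))$ at $\alpha=0$ equals $-dF_x[\widetilde\eta_x]<0$.

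There is no real obstacle here. The only step needing care is identifying $dF_x$ with the $A_x^\lambda$ inner product against $\eta_x^*$, which justifies using Cauchy--Schwarz in that norm. That identification rests on the symmetry and positive definiteness already established in Proposition~\ref{prop:operator_variational}.
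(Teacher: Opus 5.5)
Your proposal is correct and follows essentially the same route as the paper: decompose $\widetilde\eta_x=\eta_x^*+e$, use $A_x^\lambda\eta_x^*=\operatorname{grad}^0F(x)$ to write $dF_x[\widetilde\eta_x]=\|\eta_x^*\|_{A_x^\lambda}^2+\langle\eta_x^*,e\rangle_{A_x^\lambda}$, and conclude with Cauchy--Schwarz and the relative-error hypothesis. Your treatment of strict positivity (via $\eta_x^*\neq0$ and $q<1$) and of the retraction-curve derivative fills in details that the paper leaves implicit.
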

\begin{proof}
Write $e=\widetilde\eta_x-\eta_x^*$.
Since $A_x^\lambda\eta_x^*=\operatorname{grad}^0F(x)$,
\[
    dF_x[\widetilde\eta_x]
    =
    \|\eta_x^*\|_{A_x^\lambda}^2+
    \langle\eta_x^*,e\rangle_{A_x^\lambda}.
\]

Cauchy--Schwarz and~\eqref{eq:relative_energy_error} give the lower bound~\eqref{eq:inexact_descent}.
\end{proof}

\paragraph{Computational Complexity and Scalability.}\label{sec:complexity}

An EMNGD iteration comprises energy-operator construction or application, solution of a tangent linear system, and manifold operations.

We use $p$ for the ambient parameter dimension, $q=\dim\mathcal M$ for the intrinsic manifold dimension, $N$ for the number of residual samples, and $q=p$ for euclidean parameters.  
Let $\ell$ be the Nystr\"om rank and $m_{\mathrm{Krylov}}$ the number of Krylov iterations.  
The costs of one tangent-operator and one sample-space kernel application are denoted by $C_A$ and $C_K$, respectively. 
The following costs cover additional linear algebra after residual and derivative evaluation.  Residual and derivative costs depend on the PDE operator, network architecture, and automatic-differentiation implementation.

\begin{table}[t]
    \centering
    \scriptsize
    \caption{Dominant linear-algebra costs of EMNGD solvers.}
    \label{tab:complexity}
    \begin{tabular}{cccc}
        \toprule
        Method & Setup & Direction solve & Extra storage \\
        \midrule
        Direct tangent solve & $O(Nq^2)$ & $O(q^3)$ & $O(q^2)$ \\
        Exact Woodbury solve & $O(N^2q)$ & $O(N^3)$ & $O(N^2)$ \\
        Matrix-free tangent Krylov & --- & $O(m_{\mathrm{Krylov}}C_A)$ & $O(q)$ \\
        Nystr\"om-preconditioned Krylov & $O(\ell C_K+N\ell^2+\ell^3)$ & $O\!\left(m_{\mathrm{Krylov}}(C_K+N\ell+\ell^2)\right)$ & $O(N\ell+\ell^2)$ \\
        \bottomrule
    \end{tabular}
\end{table}

\begin{itemize}
    \item \textbf{Direct tangent solve.}  For residual and generalized Gauss--Newton models, explicit tangent coordinates give a residual Jacobian $J_x\in\mathbb R^{N\times q}$.  Forming $J_x^\top J_x$ costs $O(Nq^2)$, and a dense factorization costs $O(q^3)$.  The stated storage excludes $J_x$; retaining the Jacobian adds $O(Nq)$ memory.  Direct tangent solves are practical when $q$ is moderate.

    \item \textbf{Exact Woodbury solve.}  For a quadratic residual energy or a generalized Gauss--Newton pullback, the Woodbury identity replaces the tangent solve with a sample-space solve involving $K_x=J_xJ_x^\top$.  Explicit construction of $K_x$ costs $O(N^2q)$, dense solution costs $O(N^3)$, and the back-projection costs $O(Nq)$.  The route requires $O(N^2)$ additional storage and is favorable when $N\ll q$.

    \item \textbf{Matrix-free and Nystr\"om solvers.}  Matrix-free Krylov methods apply the damped tangent operator without forming a Gram matrix.  The solve costs $O(m_{\mathrm{Krylov}}C_A)$ and requires $O(q)$ working storage, apart from automatic-differentiation buffers.  Nystr\"om preconditioning constructs a rank-$\ell$ approximation to the sample-space kernel.  Each preconditioned Krylov iteration applies the exact kernel, so iterative convergence recovers the exact Woodbury direction.  A Nystr\"om sketch-and-solve method instead returns an approximate direction.

    \item \textbf{Geometric overhead and operating regimes.}  Let $C_\Pi$ and $C_R$ denote the costs of tangent projection and retraction.  An accepted update adds $C_\Pi+C_R$ to the linear-algebra cost.  Armijo backtracking with $n_{\mathrm{ls}}$ trial steps adds $O\!\left(n_{\mathrm{ls}}(C_F+C_R)\right)$, where $C_F$ is the energy-evaluation cost.  Direct tangent solves suit moderate $q$, exact Woodbury solves suit $N\ll q$, and Nystr\"om preconditioning reduces Krylov iterations for kernels with useful low-rank structure.
\end{itemize}

\section{Experiments}\label{sec:experiments}

The experiments address three questions.  First, does EMNGD recover the expected energy-metric behavior in the Euclidean specialization?  Second, do Woodbury duality and Nystr\"om preconditioning compute reliable tangent directions?  Third, how do the resulting solvers behave across PDEs, residual counts, and network sizes?  The benchmark tables use the Euclidean control $\mathcal M=\mathbb R^p$ with the additive retraction.  Separate residual-formulation diagnostics assess the Jacobian, Gramian, and sample-space solves.  The diagnostics test implementation consistency rather than architecture-matched manifold comparisons.

\subsection{Experimental Protocol}

\paragraph{Manifold parametrization.}
For a layer with weight matrix $W_\ell\in\mathbb R^{n_\ell\times n_{\ell-1}}$ we use the direction--scale decomposition
\begin{equation}\label{eq:exp:oblique_weight}
    W_\ell=\operatorname{Diag}\!\big(\exp(\rho_\ell)\big)\,Q_\ell^\top,
    \qquad
    Q_\ell\in\operatorname{Ob}(n_{\ell-1},n_\ell),
\end{equation}
where $\operatorname{Ob}(m,n)=\{Q\in\mathbb R^{m\times n}:\operatorname{diag}(Q^\top Q)=\mathbf 1\}$ is the oblique manifold of unit-norm columns.  The log-scales $\rho_\ell\in\mathbb R^{n_\ell}$ and biases $b_\ell\in\mathbb R^{n_\ell}$ remain Euclidean.  Every nonzero weight row is a length times a unit direction.  The direction--scale decomposition preserves the network function class and does not reduce the model.  Let $\Theta=((Q_\ell,\rho_\ell,b_\ell))_{\ell=1}^L$.  The parameter manifold is
\begin{equation}\label{eq:exp:product_manifold}
    \mathcal M=\prod_{\ell=1}^L\Big[\operatorname{Ob}(n_{\ell-1},n_\ell)\times\mathbb R^{n_\ell}\times\mathbb R^{n_\ell}\Big].
\end{equation}

The baseline metric uses the Frobenius metric on oblique factors and the Euclidean metric on scale and bias factors.  For an oblique factor, the tangent space, orthogonal projection, and normalization retraction are
\begin{align}\label{eq:exp:oblique_geometry}
    T_Q\operatorname{Ob}(m,n)&=\{\Xi:\operatorname{diag}(Q^\top\Xi)=0\},
    \qquad
    \Pi_Q(Z)=Z-Q\operatorname{Diag}\!\big(\operatorname{diag}(Q^\top Z)\big),
    \\
    R_Q(\Xi)&=(Q+\Xi)\operatorname{Diag}\!\Big(\operatorname{diag}\big((Q+\Xi)^\top(Q+\Xi)\big)\Big)^{-1/2},
    \notag
\end{align}
where the retraction $R_Q$ renormalizes the columns of $Q+\Xi$.  The Euclidean factors use $R_\rho(\delta\rho)=\rho+\delta\rho$ and $R_b(\delta b)=b+\delta b$.  The unconstrained Euclidean control drops the oblique constraint, so $\mathcal M=\mathbb R^p$ and $R_\Theta(v)=\Theta+v$.

The PDE benchmarks compare optimizers in the Euclidean control.  The product-manifold parametrization defines the constrained EMNGD setting.  The residual diagnostics test the tangent-space and sample-space computations separately.

\paragraph{Intrinsic EMNGD direction.}
Let $r(\Theta)$ collect the weighted interior, boundary, and initial residuals.  The tangent differential $\mathcal J_\Theta:T_\Theta\mathcal M\to\mathbb R^N$ maps a tangent direction to the residual change.  At iteration $k$, the damped EMNGD direction solves
\begin{equation}\label{eq:exp:tangent_system}
    \big(\mathcal J_{\Theta_k}^*\mathcal J_{\Theta_k}+\lambda_k I\big)\eta_k=\mathcal J_{\Theta_k}^*\,r(\Theta_k)
    \qquad\text{in }T_{\Theta_k}\mathcal M,
\end{equation}
The Woodbury form~\eqref{eq:emngd:mopt:manifold-woodbury} gives the same tangent direction with kernel $K_{\Theta_k}=\mathbf J_k\Pi_{\Theta_k}\mathbf J_k^\top$.  The line search evaluates $\Theta_k(\alpha)=R_{\Theta_k}(-\alpha\eta_k)$.
\eee

We globalize EMNGD with the retraction-based Armijo line search in Algorithm~\ref{alg:EMNGD}.  The search starts from the full trial step $\alpha=1$.  The projected-Newton interpretation of the undamped direction motivates that initial value.  Nonlinear realization maps, parameter manifolds, and retractions require a sufficient-decrease test before accepting the full trial step.

The experiments evaluate candidate step sizes on the geometric grid
\[
    \mathcal A=\{1,\beta,\beta^2,\ldots,\beta^{m_{\rm ls}}\}
    \subset(0,1],
    \qquad \beta\in(0,1).
\]

Energy evaluations on the grid can run in parallel.  The implementation selects the largest candidate satisfying the Armijo condition.  Further backtracking extends the grid when no candidate is accepted.
\begin{algorithm}[t]
\caption{Damped EMNGD with retraction-based Armijo line search}
\label{alg:EMNGD}
\begin{algorithmic}[1]
\STATE {\bfseries Input:} initial point $x_0\in\mathcal M$; positive damping parameters $\{\lambda_k\}_{k\geq0}$; Armijo parameter $c\in(0,1)$; backtracking factor $\beta\in(0,1)$; gradient tolerance $\varepsilon_{\rm grad}>0$; maximum iterations $N_{\max}$
\FOR{$k=0,\ldots,N_{\max}-1$}
    \STATE Compute $g_k=\operatorname{grad}^{0}F(x_k)\in T_{x_k}\mathcal M$.
    \IF{$\|g_k\|_{g^0_{x_k}}\leq\varepsilon_{\rm grad}$}
        \STATE {\bfseries stop}
    \ENDIF
    \STATE Define the $g^0_{x_k}$-self-adjoint tangent operator $A_k^{\lambda_k}:T_{x_k}\mathcal M\to T_{x_k}\mathcal M$ by
    \STATE \hspace{1em}$g^0_{x_k}(A_k^{\lambda_k}\xi,\zeta)=g^E_{x_k}(\xi,\zeta)+\lambda_k g^0_{x_k}(\xi,\zeta)$ for all $\xi,\zeta\in T_{x_k}\mathcal M$.
    \STATE Solve $A_k^{\lambda_k}\eta_k=g_k$ in $T_{x_k}\mathcal M$ exactly or to a prescribed inner-solver tolerance.
    \IF{$dF_{x_k}[\eta_k]\leq0$}
        \STATE Increase $\lambda_k$, or tighten the inner-solver tolerance, and recompute $\eta_k$.
    \ENDIF
    \STATE Set $\alpha_k\gets1$.
    \WHILE{$F\!\left(R_{x_k}(-\alpha_k\eta_k)\right)>F(x_k)-c\alpha_kdF_{x_k}[\eta_k]$}
        \STATE $\alpha_k\gets\beta\alpha_k$.
    \ENDWHILE
    \STATE Update $x_{k+1}=R_{x_k}(-\alpha_k\eta_k)$.
\ENDFOR
\end{algorithmic}
\end{algorithm}
For quadratic residual energies or generalized Gauss--Newton pullback metrics, Algorithm~\ref{alg:EMNGDWoodbury} uses an embedded product manifold with the metric induced by the ambient Euclidean product space.  The induced metric gives $\mathcal J_\Theta^*=\Pi_\Theta\mathbf J_\Theta^\top$.  General Riemannian metrics require metric-dependent projectors and adjoints.
\begin{algorithm}[t]
\caption{EMNGD with exact Woodbury and Nystr\"om-preconditioned solves}
\label{alg:EMNGDWoodbury}
\begin{algorithmic}[1]
\STATE {\bfseries Input:} initial point $\Theta_0\in\mathcal M$; positive damping parameters $\{\lambda_k\}_{k\geq0}$; solver mode $\mathsf{mode}\in\{\mathsf{Direct},\mathsf{NysPCG}\}$; Nystr\"om rank $\ell$; linear-solver tolerance $\varepsilon_{\rm lin}$; Armijo parameters $c,\beta\in(0,1)$; maximum iterations $N_{\max}$
\FOR{$k=0,\ldots,N_{\max}-1$}
    \STATE Evaluate $r_k=r(\Theta_k)\in\mathbb R^N$.
    \STATE Define the ambient residual Jacobian $\mathbf J_k=Dr(\Theta_k):\mathbb R^p\to\mathbb R^N$ through explicit assembly or matrix-free Jacobian products.
    \STATE Form or apply the $g^0$-orthogonal tangent projector $\Pi_k=\Pi_{\Theta_k}:\mathbb R^p\to T_{\Theta_k}\mathcal M$.
    \STATE Define $\mathcal J_k=\left.\mathbf J_k\right|_{T_{\Theta_k}\mathcal M}$ and $\mathcal J_k^*=\Pi_k\mathbf J_k^\top$.
    \STATE Define $K_k=\mathcal J_k\mathcal J_k^*=\mathbf J_k\Pi_k\mathbf J_k^\top$.
    \IF{$\mathsf{mode}=\mathsf{Direct}$}
        \STATE Solve $(K_k+\lambda_k I_N)a_k=r_k$ by a direct symmetric positive-definite solver.
    \ELSE
        \STATE Construct a rank-$\ell$ Nystr\"om approximation $\widehat K_k$ of $K_k$ and set $M_k=\widehat K_k+\lambda_k I_N$.
        \STATE Solve the \emph{exact} system $(K_k+\lambda_k I_N)a_k=r_k$ by preconditioned conjugate gradients with $M_k$ until $\|(K_k+\lambda_k I_N)a_k-r_k\|_2/\|r_k\|_2\leq\varepsilon_{\rm lin}$.
    \ENDIF
    \STATE Reconstruct $\eta_k=\mathcal J_k^*a_k=\Pi_k\mathbf J_k^\top a_k\in T_{\Theta_k}\mathcal M$.
    \IF{$dF_{\Theta_k}[\eta_k]\leq0$}
        \STATE Increase $\lambda_k$, or reduce $\varepsilon_{\rm lin}$, and recompute $a_k$ and $\eta_k$.
    \ENDIF
    \STATE Set $\alpha_k\gets1$ and apply the Armijo backtracking rule from Algorithm~\ref{alg:EMNGD}.
    \STATE Update $\Theta_{k+1}=R_{\Theta_k}(-\alpha_k\eta_k)$.
\ENDFOR
\end{algorithmic}
\end{algorithm}
Algorithm~\ref{alg:EMNGD} is the intrinsic, coordinate-free EMNGD method.  The algorithm defines the energy-metric equation on the current tangent space and uses retraction-based Armijo backtracking to obtain the next feasible iterate.

Algorithm~\ref{alg:EMNGDWoodbury} specializes EMNGD to quadratic residual energies or generalized Gauss--Newton pullback metrics on embedded product manifolds with the induced Euclidean product metric.  A direct sample-space solve computes the exact damped EMNGD direction through the Woodbury identity.  Nystr\"om-preconditioned Krylov iteration solves the same sample-space system to a prescribed tolerance.  Nystr\"om changes the conditioning of the inner solve but does not change the target EMNGD direction.

The unconstrained Euclidean specialization follows from $\mathcal M=\mathbb R^p$, $\Pi_\Theta=I_p$, and $R_\Theta(\eta)=\Theta+\eta$.  Section~\ref{sec:complexity} compares parameter-space, sample-space, and matrix-free costs.

In local coordinates at $\Theta_k$, let $G_{E,k}^\phi$, $G_{0,k}^\phi$, and $b_k$ denote the energy Gramian, baseline Gramian, and coordinate representation of $dF_{\Theta_k}$.  The damped EMNGD direction is the unique minimizer of the strictly convex tangent quadratic model
\begin{equation}\label{eq:coordinate_emngd_variational}
    v_k
    =
    \arg\min_{v\in\mathbb R^q}
    \left\{
        \frac12v^\top\big(G_{E,k}^\phi+\lambda_kG_{0,k}^\phi\big)v
        -b_k^\top v
    \right\},
    \qquad q=\dim\mathcal M.
\end{equation}

For quadratic residual energies or the generalized Gauss--Newton metric, the same direction solves the Tikhonov-regularized tangent least-squares problem
\begin{equation}\label{eq:tangent_tikhonov}
    v_k
    =
    \arg\min_{v\in\mathbb R^q}
    \left\{
        \frac12\|J_{\phi,k}v-r_k\|_2^2
        +\frac{\lambda_k}{2}v^\top G_{0,k}^\phi v
    \right\},
\end{equation}
where $J_{\phi,k}$ is the residual Jacobian in the selected tangent basis.

When the undamped tangent system is singular, the Moore--Penrose direction requires an explicit minimum-$g^0$-norm solution convention.  The computational algorithms use $\lambda_k>0$ to ensure uniqueness and improve conditioning.

\paragraph{Initialization, sampling, and budgets.}
For loss and Gram-matrix integrals, we \mzz use fixed regular grids or resampled random points. \eee  We initialize weights and biases from a zero-mean Gaussian with standard deviation $0.1$.  \mzz On the product manifold, each $Q_\ell$ is the columnwise normalization of the corresponding Gaussian matrix.  The log-scales $\rho_\ell$ and biases remain Euclidean.  The Euclidean control uses the unnormalized Gaussian initialization. \eee  Each PDE subsection states the collocation rule and iteration budget.  Tables list runtime separately from iteration counts.  The studies are not wall-clock-matched comparisons.

\subsubsection{Evaluation Metrics and Baselines}

We report relative $L^2$ error and, where derivative evaluations are available, relative $H^1$ error.  Evaluation uses denser quadrature than optimization.  The Euclidean-control studies compare stochastic gradient descent (SGD), Adam, BFGS \citep{nocedal1999numerical}, and ENGD.  SGD uses a logarithmic line-search grid.  Adam starts at $10^{-3}$.  After $1.5\times10^4$ steps, the learning rate decreases by a factor of $10^{-1}$ every $10^4$ steps.  The schedule stops at $10^{-7}$ or the iteration budget.

The convergence and mechanism figures display NGD, Hessian-free, Woodbury, SPRING, and Nystr\"om variants when the corresponding curve is labelled.  The displayed trajectories use the stated configurations and do not replace the 10-initialization table protocol.  The task-specific iteration budgets state the computational allocations.

The preliminary one-dimensional studies establish the Euclidean reduction and the solver identity.  Figure~\ref{fig:emngd_convergence_compare} summarizes loss and final relative $L^2$ error across the displayed PDEs.  The labelled EMNGD curve reaches the lowest displayed final errors.  Figure~\ref{fig:emgd_1d_experiments} shows that Woodbury ENGD follows the parameter-space ENGD trajectory.  The sample-space solve therefore changes the linear algebra, not the direction.

Figure~\ref{fig:poisson_1d_solution_comparison} gives a spatial check on one-dimensional Poisson.  EMNGD overlaps the reference solution and keeps the pointwise error near $10^{-7}$ or lower over most of the domain.  ENGD and NGD also track the reference, but retain interior errors near $10^{-5}$.  The diagnostic ordering supports the trajectory results but does not replace matched-budget comparisons.

\begin{figure}[t]
    \centering
    \includegraphics[width=\linewidth]{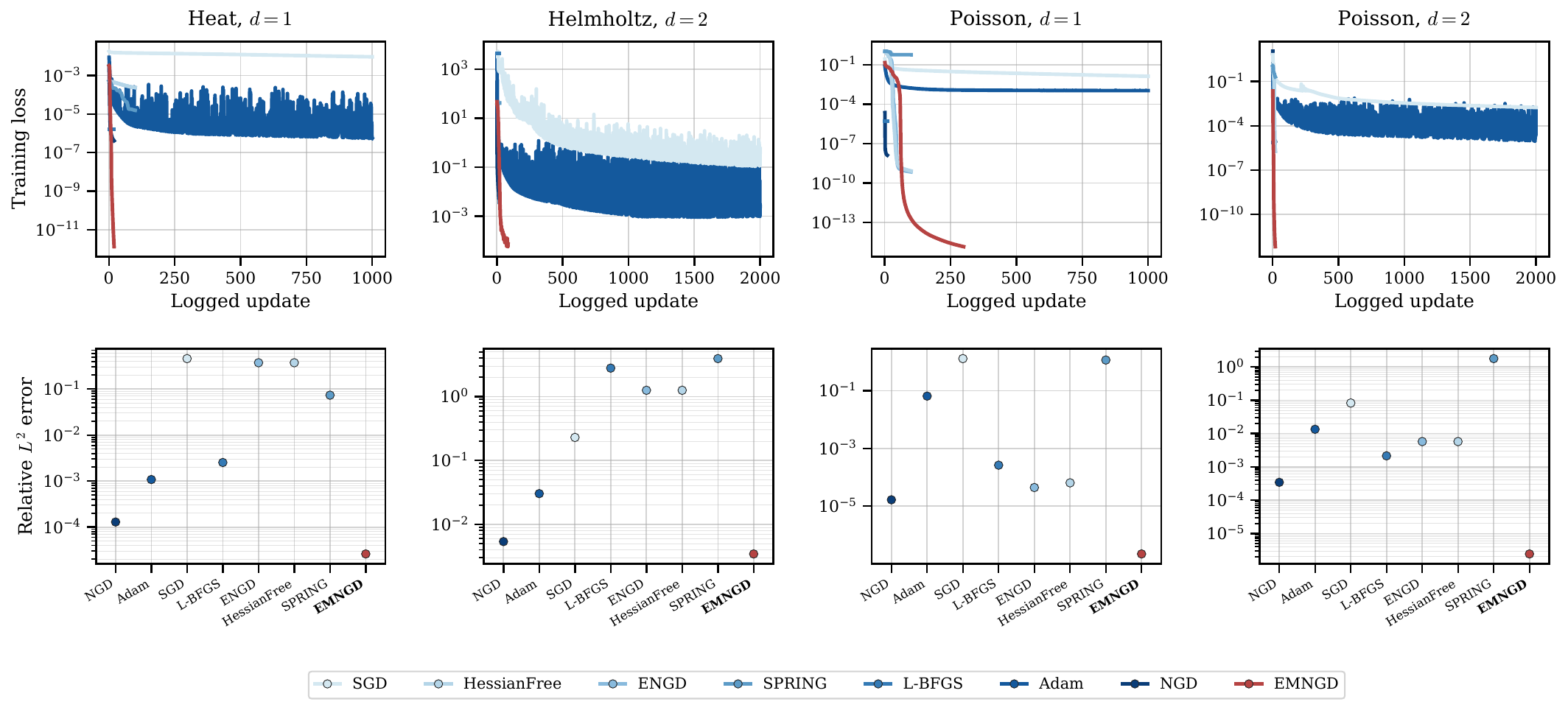}
    \vspace{-16pt}
    \caption{Training loss (top) and final relative $L^2$ error (bottom) across PDE benchmarks.}
    \label{fig:emngd_convergence_compare}
\end{figure}

\begin{figure}[t]
    \centering
    \includegraphics[width=\linewidth]{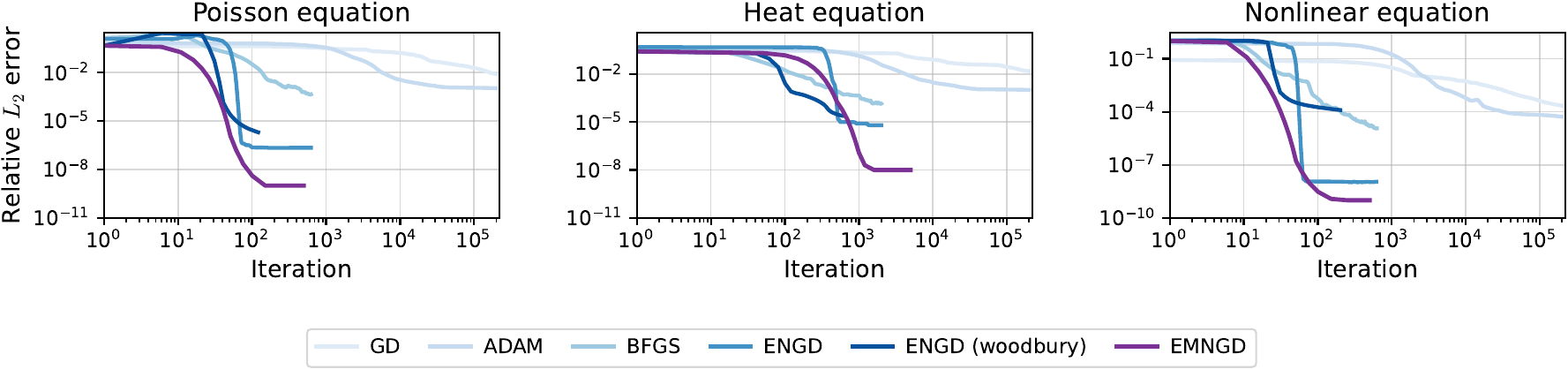}
    \vspace{-16pt}
    \caption{One-dimensional PDE benchmark.}
    \label{fig:emgd_1d_experiments}
\end{figure}

\begin{figure}[t]
    \centering
    \includegraphics[width=\linewidth]{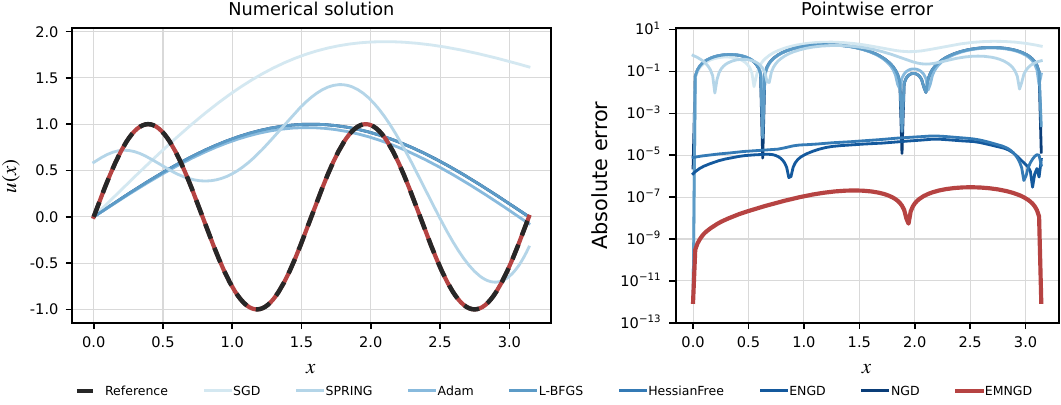}
    \caption{One-dimensional Poisson solutions and pointwise errors (log scale).}
    \label{fig:poisson_1d_solution_comparison}
\end{figure}

\subsubsection{Implementation Details}

We implement the solvers in JAX \citep{jax2018github} with automatic differentiation.  Least-squares solves use singular-value decomposition.  BFGS uses \texttt{jaxopt.BFGS}.  Unless stated otherwise, experiments run in double precision on one NVIDIA RTX 5090 Laptop GPU.  The implementation is available at \url{https://github.com/liangzhangyong/EMNGD}.

\subsection{Residual-Formulation Diagnostics}

The residual-formulation diagnostic verifies the hard-Dirichlet embedding and Woodbury solve before PDE accuracy comparisons.  The trial map $u_\theta(x)=\prod_{i=1}^2x_i(1-x_i)v_\theta(x)$ imposes the boundary condition by construction, leaving $u^\ast$ outside the training objective.  For 48 fixed interior residual points and a 337-parameter network, 80 updates reduce the residual loss by more than 14 orders of magnitude, from $5.521\times10^{1}$ to $2.941\times10^{-13}$, and yield a held-out relative $L^2$ error of $2.208\times10^{-4}$.  The diagnostic supports the correct interaction of the constraint embedding and Woodbury solver on the stated fixed-sample problem.

The primal and Woodbury directions agree to relative error $8.91\times10^{-9}$, verifying the dual implementation at numerical precision.  
Figures~\ref{fig:emngd-import-fisher} and~\ref{fig:emngd-import-spectrum} reveal evolving residual-Fisher geometry and a wide spectral range, which motivates damping in the sample-space solve. 
Figure~\ref{fig:emngd-woodbury-jacobian} displays the $48\times48$ kernel system that replaces the $337\times337$ parameter-space system, while Figure~\ref{fig:emngd-import-gramian} reconstructs the layerwise Gramian to error $4.36\times10^{-16}$.  
In contrast, Figure~\ref{fig:emngd-import-sharing} has relative Frobenius error $7.08\times10^{-1}$, so weight sharing changes the kernel and remains an approximation.

\begin{figure}[t]
    \centering
    \includegraphics[width=\linewidth]{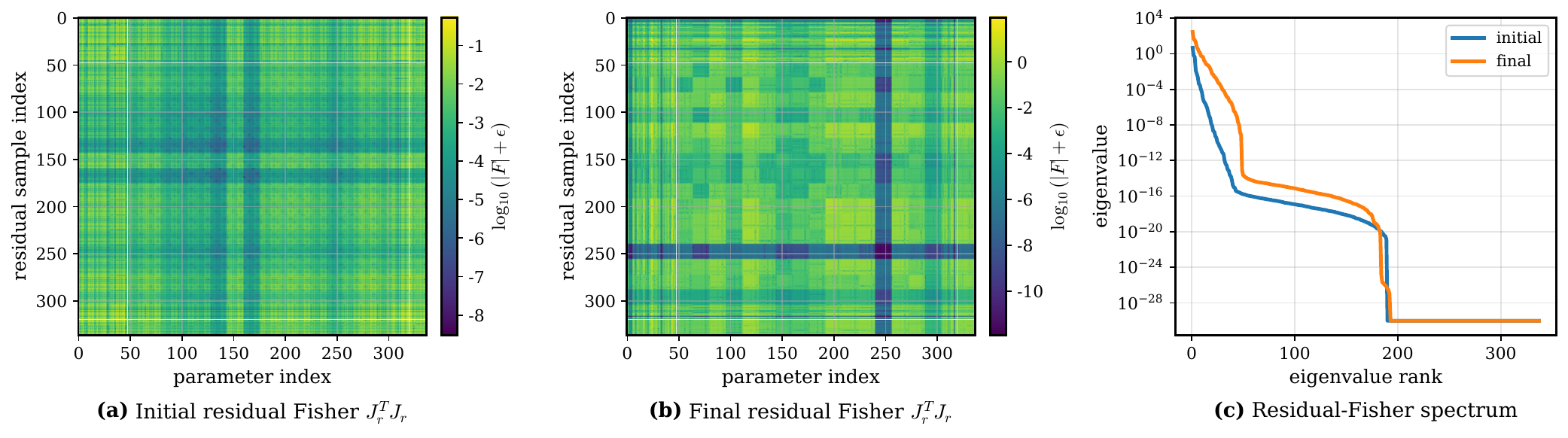}
    \vspace{-16pt}
    \caption{Residual-Fisher geometry for hard-Dirichlet EMNGD.}
    \label{fig:emngd-import-fisher}
\end{figure}

\begin{figure}[t]
    \centering
    \includegraphics[width=\linewidth]{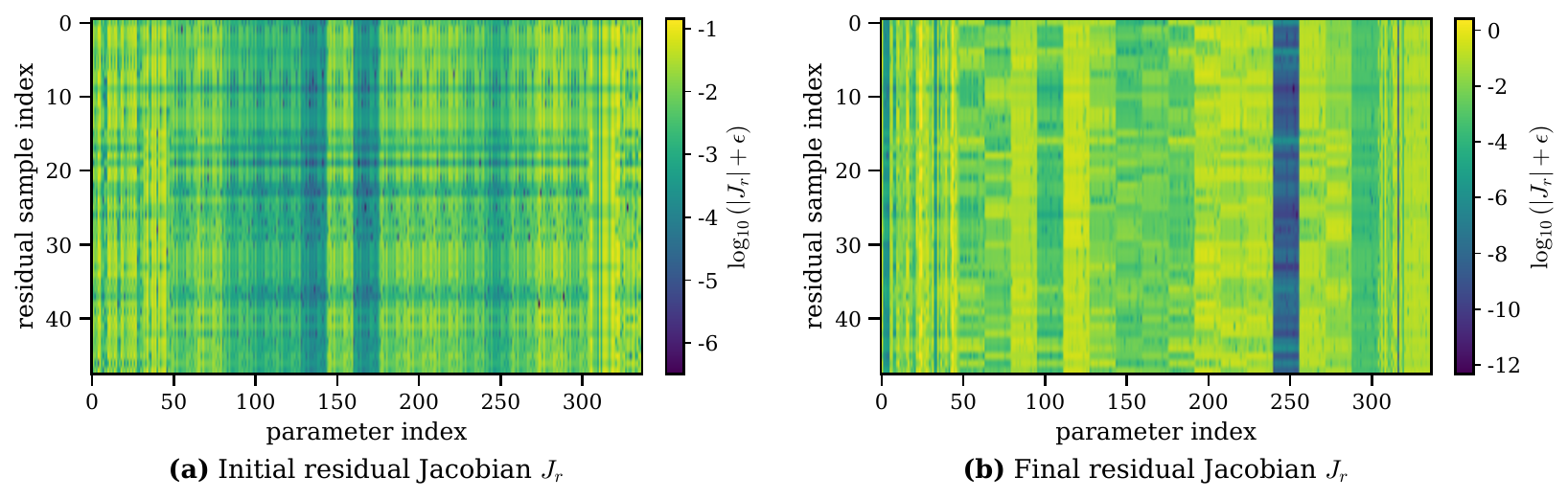}
    \caption{Residual Jacobians for Woodbury EMNGD on two-dimensional Poisson.}
    \label{fig:emngd-woodbury-jacobian}
\end{figure}

\begin{figure}[t]
    \centering
    \includegraphics[width=\linewidth]{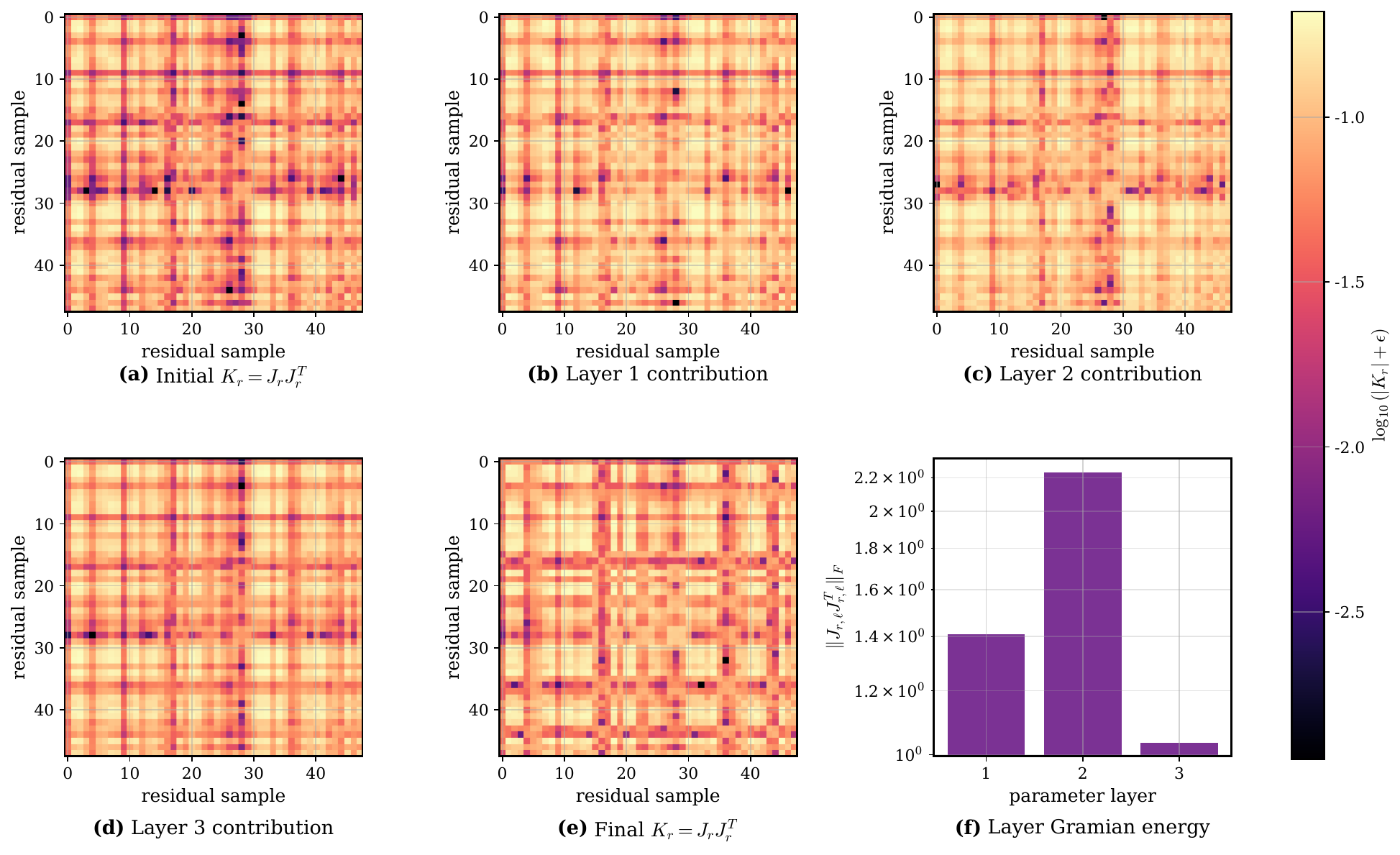}
    \caption{Layerwise contributions to the residual Gramian.}
    \label{fig:emngd-import-gramian}
\end{figure}

\begin{figure}[t]
    \centering
    \includegraphics[width=0.88\linewidth]{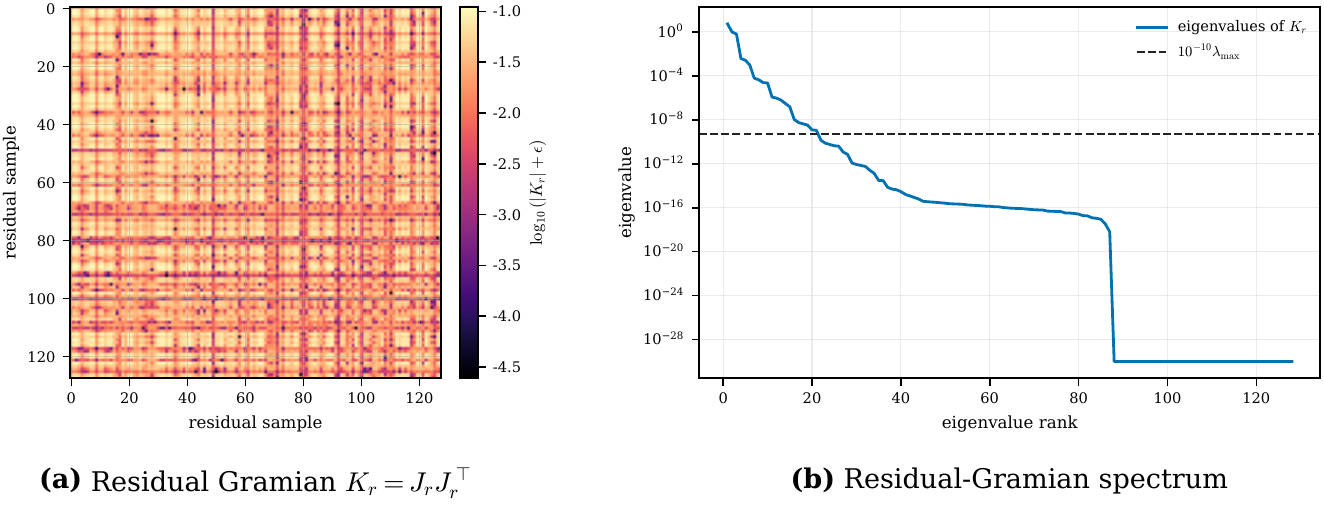}
    \caption{Sample-space residual-Gramian spectrum for $128$ residual samples.}
    \label{fig:emngd-import-spectrum}
\end{figure}

Figure~\ref{fig:emngd-local-preconditioner-effect} compares the exact Woodbury solve with rank-900 Nystr\"om preconditioning.  Both reduce residual loss to the $10^{-12}$ scale and reach relative $L^2$ errors near $10^{-8}$.  Woodbury finishes slightly lower ($1.09\times10^{-8}$ versus $1.45\times10^{-8}$).  The Nystr\"om trajectory closely follows Woodbury.

\begin{figure}[t]
    \centering
    \includegraphics[width=\linewidth]{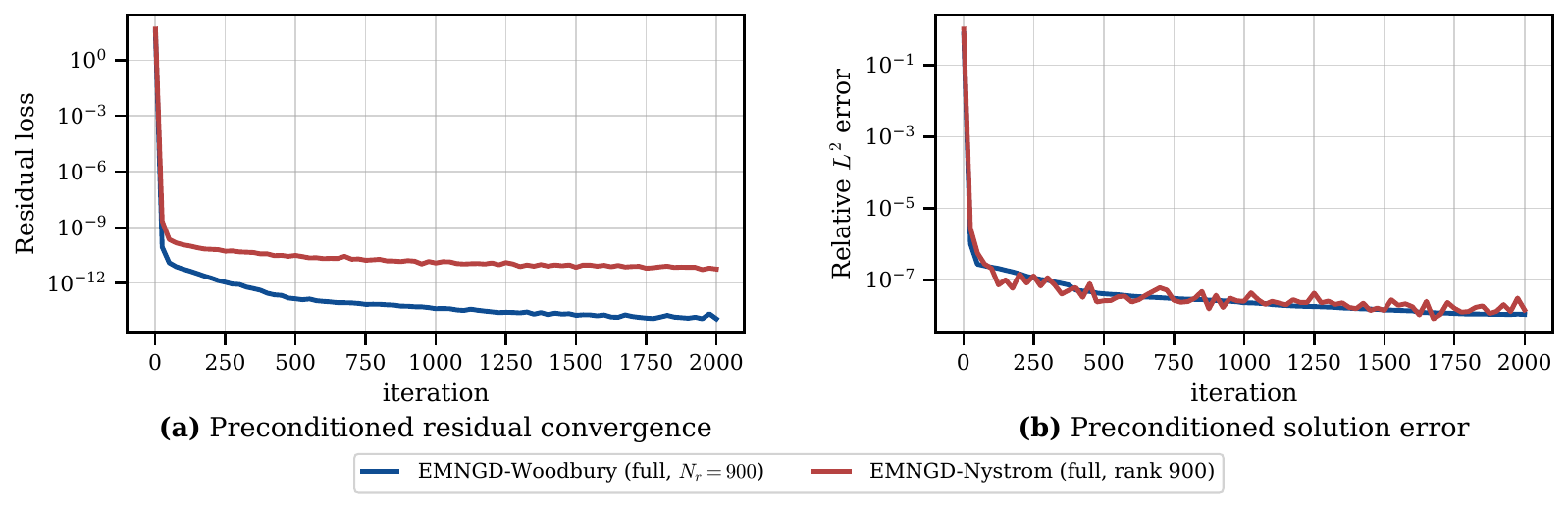}
    \caption{EMNGD with an exact Woodbury solve and rank-900 Nystr\"om preconditioning.  Left: residual loss.  Right: relative $L^2$ error.}
    \label{fig:emngd-local-preconditioner-effect}
\end{figure}

\begin{figure}[t]
    \centering
    \includegraphics[width=\linewidth]{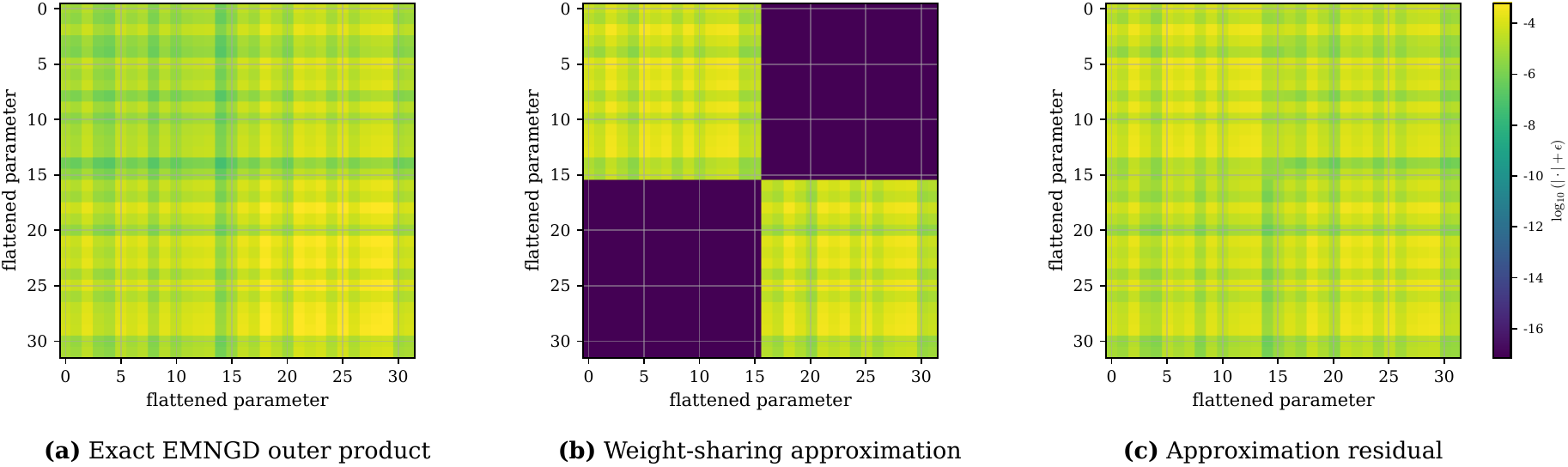}
    \caption{Weight-sharing approximation for an EMNGD residual-Jacobian block.}
    \label{fig:emngd-import-sharing}
\end{figure}

\subsection{Poisson Equation}\label{sec:Poisson_2d}
We consider the two-dimensional Poisson equation
\begin{equation*}
    -\Delta u (x,y) = f(x,y) = 2\pi^2\sin(\pi x) \sin(\pi y), 
\end{equation*}
on the unit square $[0,1]^2$ with zero boundary values. The solution is given by
\begin{equation*}
    u^*(x,y) = \sin(\pi x) \sin(\pi y),
\end{equation*}
and the PINNs loss of the problem is
\begin{align}\label{eq:poisson_loss}
\begin{split}
    L (\theta) & = \frac{1}{N_\Omega}\sum_{i=1}^{N_\Omega}(\Delta u_\theta(x_i,y_i) + f(x_i,y_i))^2 \\ & \qquad\qquad\quad  + \frac{1}{N_{\partial\Omega}}\sum_{i=1}^{N_{\partial\Omega}}u_\theta(x^b_i,y^b_i)^2,
\end{split}
\end{align}
where $\{(x_i,y_i)\}_{i=1,\dots,N_\Omega}$ denote the interior collocation points and $\{(x^b_i,y^b_i)\}_{i=1,\dots,N_{\partial\Omega}}$ denote the collocation points on $\partial\Omega$.  For the Poisson problem, the energy inner product on $H^2(\Omega)$ is
\begin{equation}\label{eq:poisson_energy_product}
    a(u,v) = \int_\Omega \Delta u \Delta v  \mathrm dx + \int_{\partial\Omega}u v \mathrm ds.
\end{equation}

The energy inner product is not coercive\footnote{The inner product is coercive with respect to the $H^{1/2}(\Omega)$ norm; see~\citep{muller2022notes}.} on $H^2(\Omega)$ and differs from the $H^2(\Omega)$ inner product. 
We approximate~\eqref{eq:poisson_energy_product} with the collocation points from~\eqref{eq:poisson_loss}.  The reproducibility rerun uses a common $2$--$32$--$1$ network.  SGD and Adam run for the recorded long-horizon updates.  BFGS, ENGD, and SPRING run for 50 updates.  EMNGD runs for 20 updates.

\begin{table}[t]
\centering
\scriptsize
\resizebox{0.75\linewidth}{!}{%
\begin{tabular}{ccccc}
\toprule
Method & Dim. & Steps & Loss & Relative $L^2$ error \\
\midrule
SGD & 2 & $199000$ & $2.192\times10^{-3}$ & $6.349\times10^{-3}$ \\
Adam & 2 & $200000$ & $1.200\times10^{-4}$ & $9.321\times10^{-4}$ \\
BFGS & 2 & $50$ & $3.366\times10^{-1}$ & $1.085\times10^{-1}$ \\
ENGD & 2 & $50$ & $1.046\times10^{2}$ & $4.639\times10^{-1}$ \\
SPRING & 2 & $50$ & $8.638\times10^{-6}$ & $5.158\times10^{-2}$ \\
\midrule
EMNGD & 2 & $20$ & $\mathbf{3.061\times10^{-11}}$ & $\mathbf{6.778\times10^{-9}}$ \\
\bottomrule
\end{tabular}%
}
\caption{Single-seed Poisson2D reproducibility reruns.}
\label{table:poisson}
\end{table}

\begin{table}[t]
\centering
\scriptsize
\resizebox{0.65\linewidth}{!}{%
\begin{tabular}{ccc}
\toprule
Method & Time per update & Full optimization time \\
\midrule
SGD & $1.8\times10^{-2}\,\mathrm{s}$ & $1\,\mathrm{h}$ \\
Adam & $3.7\times10^{-2}\,\mathrm{s}$ & $1\,\mathrm{h}\,6\,\mathrm{min}$ \\
BFGS & $1.8\,\mathrm{s}$ & $15\,\mathrm{min}$ \\
ENGD & $8.6\times10^{-2}\,\mathrm{s}$ & $43\,\mathrm{s}$ \\
KFAC & $4.899\times10^{-2}\,\mathrm{s}$ & $998.6\,\mathrm{s}$ \\
SPRING & $1.134\,\mathrm{s}$ & $56.68\,\mathrm{s}$ \\
\midrule
EMNGD & $\mathbf{1.271\times10^{-2}\,\mathrm{s}}$ & $\mathbf{25.42\,\mathrm{s}}$ \\
\bottomrule
\end{tabular}%
}
\caption{Poisson2D runtime records under the listed settings.}
\label{table:poisson_runtimes}
\end{table}

\begin{table}[t]
\centering
\scriptsize
\resizebox{0.75\linewidth}{!}{%
\begin{tabular}{ccccc}
\toprule
Method & Dim. & Steps & Loss & Relative $L^2$ error \\
\midrule
ANaGRAM & 2 & 50 & $4.536\times10^{1}$ & $1.181\times10^{0}$ \\
ENGD & 2 & 50 & $8.512\times10^{-9}$ & $4.493\times10^{-5}$ \\
HF-NGD & 2 & 50 & $1.033\times10^{-5}$ & $1.588\times10^{-3}$ \\
SNGD & 2 & 50 & $1.787\times10^{-7}$ & $2.217\times10^{-4}$ \\
EMNGD & 2 & 50 & $\mathbf{9.675\times10^{-12}}$ & $\mathbf{1.098\times10^{-6}}$ \\
\bottomrule
\end{tabular}%
}
\caption{Imported Poisson2D baseline integration results.}
\label{table:external_baseline_smoke}
\end{table}

\begin{table}[t]
\centering
\scriptsize
\resizebox{0.85\linewidth}{!}{%
\begin{tabular}{cccc}
\toprule
Method & Elapsed time (s) & Loss & Relative $L^2$ error \\
\midrule
SGD & $999.8$ & $7.248\times10^{-7}$ & $3.622\times10^{-5}$ \\
Adam & $999.7$ & $7.629\times10^{-7}$ & $9.399\times10^{-5}$ \\
Hessian-free & $1001.1$ & $1.484\times10^{-10}$ & $3.213\times10^{-7}$ \\
L-BFGS & $999.9$ & $1.024\times10^{-6}$ & $1.783\times10^{-4}$ \\
KFAC & $999.6$ & $4.282\times10^{-12}$ & $3.369\times10^{-7}$ \\
KFAC$^\ast$ & $999.7$ & $3.328\times10^{-12}$ & $1.990\times10^{-7}$ \\
ENGD (full) & $999.3$ & $3.553\times10^{-13}$ & $6.186\times10^{-8}$ \\
ENGD (layer-wise) & $999.9$ & $3.337\times10^{-13}$ & $6.262\times10^{-8}$ \\
ENGD (diagonal) & $999.8$ & $1.263\times10^{-4}$ & $5.371\times10^{-3}$ \\
\midrule
EMNGD--Nystr\"om & $149.9$ & $5.583\times10^{-12}$ & $1.451\times10^{-8}$ \\
EMNGD--Woodbury & $\mathbf{128.5}$ & $\mathbf{1.178\times10^{-14}}$ & $\mathbf{1.094\times10^{-8}}$ \\
\bottomrule
\end{tabular}%
}
\caption{Poisson2D endpoints of native solver implementations for $D=8{,}577$.}
\label{tab:poisson2d-native-coverage}
\end{table}

The two-dimensional study separates reproducibility, implementation coverage, and parameter scaling.  Table~\ref{table:poisson} uses a common $2$--$32$--$1$ network.  EMNGD reaches a relative $L^2$ error of $6.778\times10^{-9}$ after 20 updates.  The external runners in Table~\ref{table:external_baseline_smoke} use 257 parameters, whereas the native EMNGD run uses 8,577 parameters.  Table~\ref{tab:poisson2d-native-coverage} therefore records endpoint coverage rather than an architecture-matched ranking.

Figure~\ref{fig:poisson2d_l2_over_step} tests parameter scaling at $D=8{,}577$, $9{,}873$, and $116{,}097$.  The labelled EMNGD curve reaches relative $L^2$ errors near $10^{-7}$ within $10^3$ iterations in all three settings.  Table~\ref{tab:results-poisson} gives the corresponding terminal values.  Unequal stopping rules prevent a matched wall-clock or iteration-budget ranking.

\begin{figure}[t]
    \centering
    \includegraphics[width=\linewidth]{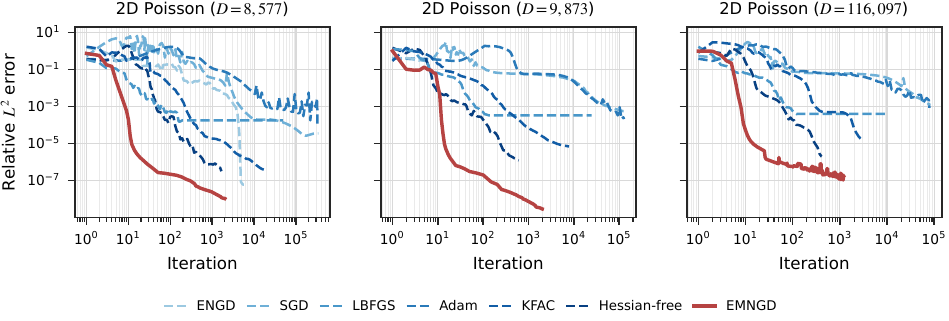}
    \caption{Two-dimensional Poisson relative $L^2$ error across three parameter dimensions.}
    \label{fig:poisson2d_l2_over_step}
\end{figure}

\begin{table*}[t]
\centering
\scriptsize
\resizebox{0.8\linewidth}{!}{%
\begin{tabular}{cccc}
\toprule
Architecture & Method & Loss & Relative $L^2$ error \\
\midrule
\multirow{7}{*}{$D=8{,}577$} & SGD & $7.248\times10^{-7}$ & $3.622\times10^{-5}$ \\
& Adam & $7.629\times10^{-7}$ & $9.399\times10^{-5}$ \\
& Hessian-free & $1.484\times10^{-10}$ & $3.213\times10^{-7}$ \\
& L-BFGS & $1.024\times10^{-6}$ & $1.783\times10^{-4}$ \\
& ENGD (full) & $3.553\times10^{-13}$ & $6.186\times10^{-8}$ \\
& KFAC & $4.282\times10^{-12}$ & $3.369\times10^{-7}$ \\
& EMNGD & $1.178\times10^{-14}$ & $\mathbf{1.094\times10^{-8}}$ \\
\midrule
\multirow{7}{*}{$D=9{,}873$} & SGD & $4.185\times10^{-6}$ & $5.575\times10^{-4}$ \\
& Adam & $1.667\times10^{-6}$ & $2.105\times10^{-4}$ \\
& Hessian-free & $7.291\times10^{-11}$ & $1.253\times10^{-6}$ \\
& L-BFGS & $2.412\times10^{-6}$ & $3.389\times10^{-4}$ \\
& ENGD (full) & $3.596\times10^{1}$ & $2.959\times10^{-1}$ \\
& KFAC & $2.108\times10^{-9}$ & $7.003\times10^{-6}$ \\
& EMNGD & $1.167\times10^{-15}$ & $\mathbf{2.973\times10^{-9}}$ \\
\midrule
\multirow{7}{*}{$D=116{,}097$} & SGD & $9.050\times10^{-6}$ & $1.219\times10^{-3}$ \\
& Adam & $2.991\times10^{-5}$ & $8.540\times10^{-4}$ \\
& Hessian-free & $1.762\times10^{-10}$ & $2.010\times10^{-6}$ \\
& L-BFGS & $2.405\times10^{-6}$ & $3.993\times10^{-4}$ \\
& ENGD (diagonal) & $2.637\times10^{-3}$ & $4.910\times10^{-2}$ \\
& KFAC & $5.829\times10^{-11}$ & $1.519\times10^{-5}$ \\
& EMNGD & $6.402\times10^{-10}$ & $\mathbf{1.767\times10^{-7}}$ \\
\bottomrule
\end{tabular}%
}
\caption{Terminal losses and relative $L^2$ errors for two-dimensional Poisson.}
\label{tab:results-poisson}
\end{table*}

Table~\ref{tab:results-poisson} lists the terminal values in Figure~\ref{fig:poisson2d_l2_over_step}.  The entries are archived trajectory endpoints and recorded hard residual-manifold EMNGD runs without $u^\ast$ in training.  The $D=116{,}097$ ENGD entry uses a diagonal approximation.  The endpoints do not form a matched-budget ranking.

\subsection{Five-Dimensional Poisson Equation}

We next consider the Poisson equation in five spatial dimensions:
    \begin{align*}
        -\Delta u & =  f \quad\quad \ \ \ \quad \quad \quad \text{in } [0, 1]^{5}, \\
        u(x) & = \sum_{k=1}^5 \sin(\pi x_k) \quad \ \text{on } \partial[0,1]^{5}.
    \end{align*}

We use the manufactured solution
\[ 
    u^\ast\colon \mathbb R^{5} \to\mathbb R, \quad   x \mapsto \sum_{k=1}^5 \sin(\pi x_k) 
\]
so $f=\pi^2u^\ast$.  We use the loss and energy inner product from Equations~\ref{eq:poisson_loss} and~\ref{eq:poisson_energy_product}.  Each optimization step draws $N_\Omega=3000$ interior points and $N_{\partial\Omega}=500$ boundary points.  The Euclidean-control network has five inputs, 64 hyperbolic-tangent hidden units, and one output.

The five-dimensional problem tests whether the sample-space solvers retain the energy-metric advantage as residual evaluation becomes more expensive.  Figure~\ref{fig:emngd_exp1_5d_poisson} reports relative $L^2$ error against iterations and wall-clock time.  Under the Euclidean-control protocol, energy-metric curves reach lower errors than the first-order baselines.  The comparison tests the Euclidean reduction, not a non-Euclidean manifold advantage.

\begin{figure}[t]
    \centering
    \includegraphics[width=0.92\linewidth]{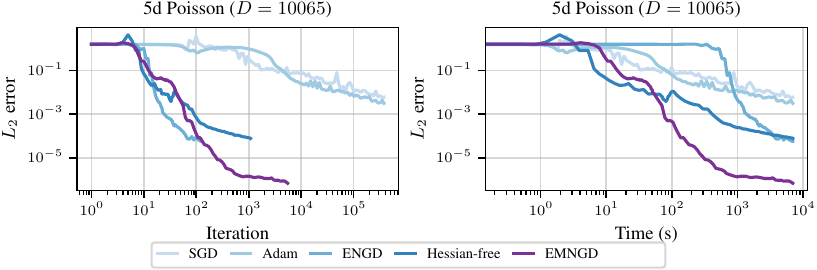}
    \caption{Five-dimensional Poisson: relative $L^2$ error versus iteration and time.}
    \label{fig:emngd_exp1_5d_poisson}
\end{figure}

Figure~\ref{fig:emngd_l2_loss_5d_poisson} separates training loss from evaluation error.  Energy-based curves reduce both quantities more rapidly than the first-order curves.  The displayed EMNGD trajectory reaches a low-error regime in both iteration and time.

\begin{figure}[t]
    \centering
    \includegraphics[width=0.92\linewidth]{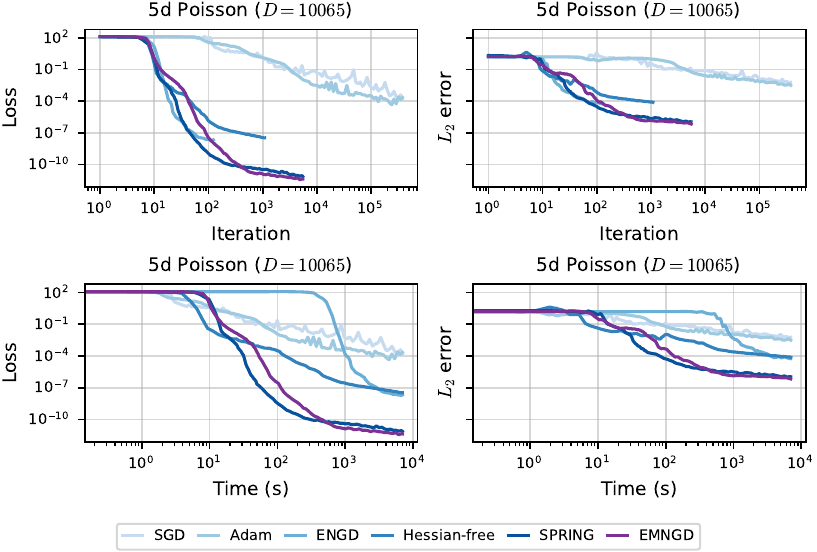}
    \caption{Loss and relative $L^2$ error for five-dimensional Poisson.}
    \label{fig:emngd_l2_loss_5d_poisson}
\end{figure}

Figure~\ref{fig:emngd_large_5d_poisson} then increases the number of collocation residuals from $N=1000$ to $N=10000$.  Woodbury and Nystr\"om variants reduce loss and relative $L^2$ error across all three sample sizes.  Nystr\"om follows the Woodbury convergence pattern while avoiding a dense sample-space solve.  The randomized SPRING curves vary more when the sampled kernel is less stable.

\begin{figure}[p]
    \centering
    \includegraphics[width=0.98\linewidth,height=0.78\textheight,keepaspectratio]{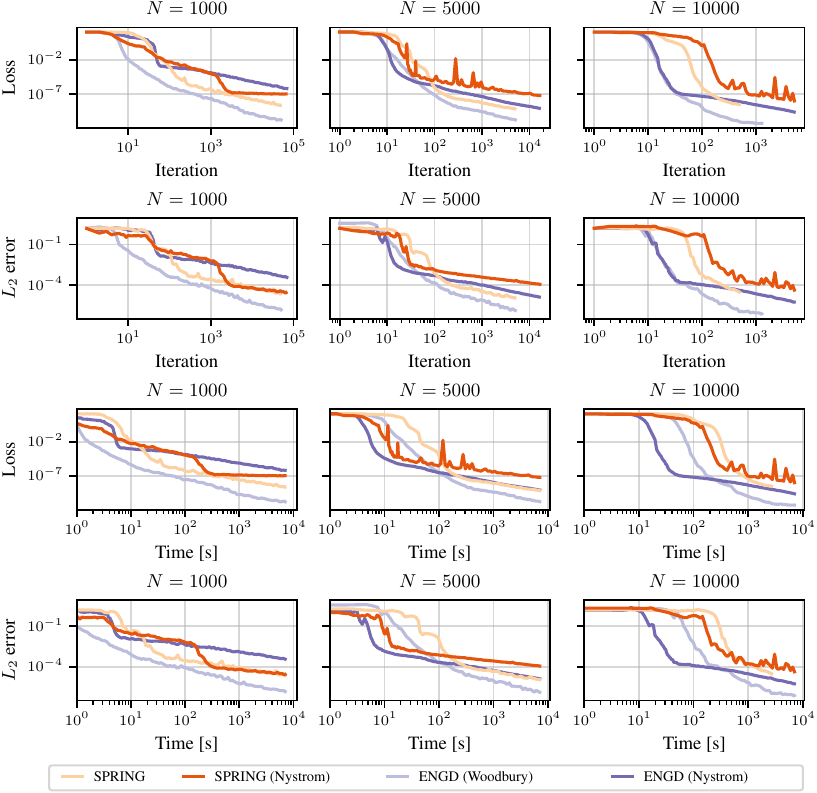}
    \caption{Large-sample five-dimensional Poisson benchmark.}
    \label{fig:emngd_large_5d_poisson}
\end{figure}

Figure~\ref{fig:emngd_metric_poisson} gives a metric-focused view of the same benchmark.  The energy-metric curves have lower loss and relative $L^2$ error than the first-order curves.  Hessian-free and SPRING improve on the first-order baselines, while the labelled EMNGD curve reaches the lowest displayed error range.

\begin{figure}[t]
    \centering
    \includegraphics[width=0.92\linewidth]{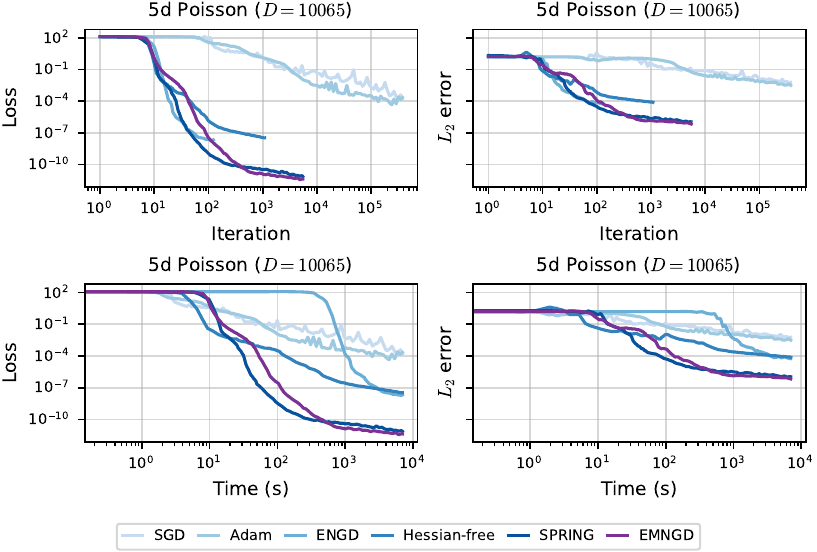}
    \caption{Metric comparison for five-dimensional Poisson.}
    \label{fig:emngd_metric_poisson}
\end{figure}

\eee

\subsection{Heat Equation}
Let us consider the one-dimensional heat equation
\begin{align*}
    \partial_t u(t,x) &= \frac{1}{4}\partial_x^2u(t,x) \quad \text{for }(t,x)\in[0,1]^2
    \\
    u(0,x) &= \sin(\pi x) \qquad\;\, \text{for }x\in[0,1]
    \\
    u(t,x) &= 0 \qquad\qquad\quad\text{for }(t,x)\in[0,1]\times\{0,1\}.
\end{align*}

The solution is given by
\begin{equation*}
    u^*(t,x) = \exp\left(-\frac{\pi^2 t}{4}\right)\sin(\pi x),
\end{equation*}
and the PINNs loss is
\begin{align*}
    L(\theta) &= \frac{1}{N_{\Omega_T}} \sum_{i=1}^{N_{\Omega_T}} \left( \partial_t u_\theta(t_i,x_i) - \frac14\partial_x^2 u_\theta(t_i, x_i) \right)^2 
    \\ 
    &\quad+ \frac{1}{N_\text{in}}\sum_{i=1}^{N_\Omega}\left(u_\theta(0,x_i^{\text{in}}) - \sin(\pi x_i^{\text{in}}) \right)^2
    \\&\quad +
    \frac{1}{N_{\partial\Omega}}\sum_{i=1}^{N_{\partial\Omega}}u_\theta(t^b_i,x^b_i)^2,
\end{align*}
where $\{ (t_i,x_i) \}_{i=1,\dots, N_{\Omega_T}}$ are interior space-time collocation points.  The set $\{ (t_i^b,x_i^b) \}_{i=1,\dots,N_{\partial\Omega}}$ contains spatial-boundary points.  The set $\{ (x_i^{\text{in}}) \}_{i=1,\dots,N_{\text{in}}}$ contains initial-condition points.  The energy inner product is defined on
\begin{equation*}
    a\colon\left(H^1(I,L^2(\Omega)) \cap L^2(I,H^2(\Omega))\right)^2 \to \mathbb R,
\end{equation*}
and given by
\begin{align*}
    a(u,v) &= \int_0^1\int_{\Omega}\left(\partial_t u - \frac14 \partial_x^2 u\right)\left(\partial_t v - \frac14 \partial_x^2 v\right)\,\mathrm dx\mathrm dt
    \\
    &\quad +
    \int_\Omega u(0,x)v(0,x)\, \mathrm dx + \int_{I\times\partial\Omega}uv \,\mathrm ds \mathrm dt.
\end{align*}

The heat problem tests convergence under a fixed recorded-time budget.  We discretize the energy inner product with the loss quadrature points.  The Euclidean-control experiment uses a width-64 hyperbolic-tangent network.  Table~\ref{table:heat_runtimes} reports updates, runtime, and final relative $L^2$ error.  EMNGD reaches the lowest recorded error in 199.03 seconds.

\begin{table}[t]
\centering
\scriptsize
\resizebox{0.9\linewidth}{!}{%
\begin{tabular}{ccccc}
\toprule
Method & Steps & Time per step & Run time & Relative $L^2$ error \\
\midrule
SGD & $288296$ & $3.465\times10^{-3}\,\mathrm{s}$ & $998.95\,\mathrm{s}$ & $2.825\times10^{-4}$ \\
Adam & $281802$ & $3.545\times10^{-3}\,\mathrm{s}$ & $998.96\,\mathrm{s}$ & $1.198\times10^{-4}$ \\
L-BFGS & $57770$ & $1.729\times10^{-2}\,\mathrm{s}$ & $998.86\,\mathrm{s}$ & $2.688\times10^{-4}$ \\
Hessian-free & $998$ & $1.002\,\mathrm{s}$ & $1000.46\,\mathrm{s}$ & $6.195\times10^{-7}$ \\
ENGD (full) & $4323$ & $2.310\times10^{-1}\,\mathrm{s}$ & $998.79\,\mathrm{s}$ & $3.047\times10^{-8}$ \\
KFAC & $13877$ & $7.196\times10^{-2}\,\mathrm{s}$ & $998.59\,\mathrm{s}$ & $1.883\times10^{-6}$ \\
SPRING & $1999$ & $1.607\times10^{-1}\,\mathrm{s}$ & $321.22\,\mathrm{s}$ & $9.927\times10^{-4}$ \\
\midrule
EMNGD & $1999$ & $9.957\times10^{-2}\,\mathrm{s}$ & $\mathbf{199.03\,\mathrm{s}}$ & $\mathbf{2.469\times10^{-9}}$ \\
\bottomrule
\end{tabular}%
}
\caption{Audited Heat-1D runtime and relative-error records across methods.}
\label{table:heat_runtimes}
\end{table}

Figure~\ref{fig:heat1d_l2_over_step} tests the same behavior at $D=4{,}417$, $5{,}441$, and $99{,}585$.  
The labelled EMNGD curve reaches relative $L^2$ errors near $10^{-8}$ within a few thousand iterations in all three settings.  The other displayed optimizers remain at higher errors over the plotted trajectories.
The runtimes are configuration-specific and do not represent hardware-independent complexity estimates.

\begin{figure}[t]
    \centering
    \includegraphics[width=\linewidth]{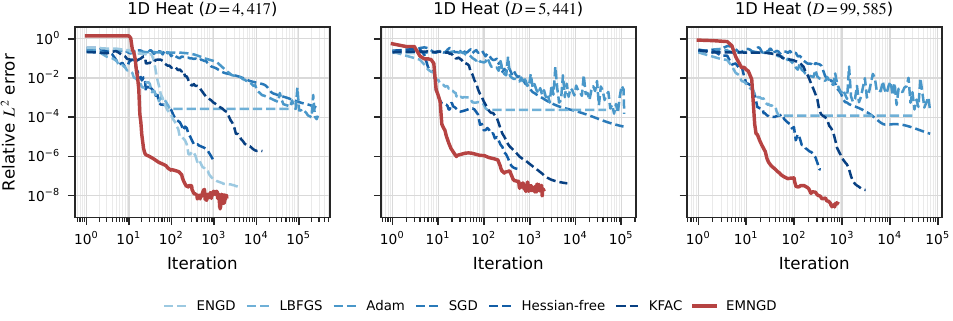}
    \caption{One-dimensional heat relative $L^2$ error across three parameter dimensions.}
    \label{fig:heat1d_l2_over_step}
\end{figure}

\section{Discussion}\label{sec:discussion}

\paragraph{Feasible and energy geometry.}
EMNGD combines two distinct structures.  
The parameter manifold defines admissible local variations, and retractions preserve feasibility of finite updates.  The pullback energy metric ranks the admissible variations by their function-space effects.  EMNGD therefore minimizes the energy-induced quadratic model directly on the tangent space.  The construction keeps the residual energy unchanged while incorporating parameter constraints through $T_x\mathcal M$ and $R_x$.  Post-hoc projection of an ambient ENGD direction generally solves a different local problem when projection and inversion do not commute.  The distinction matters whenever the manifold geometry restricts directions that have strong energy curvature in the ambient space.

\paragraph{Geometry and linear algebra.}
The geometric definition does not depend on a particular tangent solver.  For quadratic residual energies and generalized Gauss--Newton pullbacks, Woodbury gives an exact sample-space form of the damped tangent direction.  The identity replaces the primal system with the kernel $K_x=\mathbf J_x\Pi_x\mathbf J_x^\top$ and reconstructs the tangent vector through $\Pi_x\mathbf J_x^\top$.  Sample-space natural-gradient solves also arise in MinSR methods for variational Monte Carlo~\citep{chen2023efficient,rende2024simple}.  For EMNGD, Woodbury changes the linear algebra but not the tangent metric, feasible space, or retraction.

Nystr\"om methods have two different roles.  Sketch-and-solve uses a low-rank kernel approximation and produces an approximate tangent direction.  Nystr\"om-preconditioned Krylov iteration uses the approximation only to accelerate the exact Woodbury system.  Iterative convergence then recovers the same damped direction as the direct sample-space solve.  The distinction is important when interpreting accuracy and computational cost.

Figure~\ref{fig:discussion_minsr_woodbury_emngd} isolates the solver behavior in the overparameterized regime $N\ll p$.  Panel~(a) compares the scaling of the primal and sample-space formulations as the ambient dimension grows.  The sample-space cost remains controlled by the residual count.  Panel~(b) records primal--dual direction agreement at numerical precision and shows the dependence of a Nystr\"om approximation on sketch rank.  Panel~(c) compares the corresponding linearized residual trajectories.  The diagnostic separates exact Woodbury duality from acceleration strategies that approximate the kernel.

\begin{figure}[t]
    \centering
    \includegraphics[width=\linewidth]{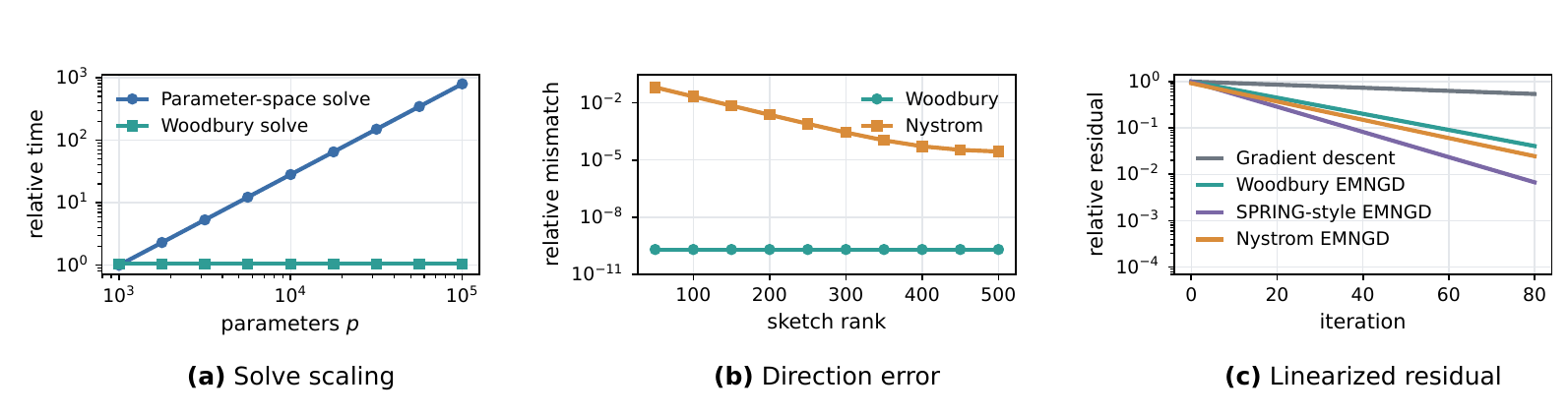}
    \vspace{-16pt}
    \caption{Woodbury and Nystr\"om diagnostics for tangent-solver scaling and agreement.}
    \label{fig:discussion_minsr_woodbury_emngd}
\end{figure}

\paragraph{Operating regimes and limitations.}
Woodbury is most useful when the residual count is below the intrinsic parameter dimension.  The computational bottleneck then moves from a parameter-space system to an $N\times N$ sample kernel.  Large residual sets still require quadratic kernel storage and can require expensive dense solves.  The formulation therefore does not remove the cost of residual evaluation, Jacobian products, or sample-space conditioning.  Matrix-free products and iterative solves become necessary when dense kernels no longer fit in memory.

Figure~\ref{fig:discussion_minsr_limitations} summarizes three practical limits.  Sample-kernel memory grows quadratically with $N$.  Reducing the damping parameter approaches the undamped natural-gradient direction but worsens conditioning and amplifies residual perturbations.  Residual subsampling replaces the full kernel by a sampled system, so the computed direction can differ from the full-residual direction.  Nystr\"om preconditioning is effective when the regularized kernel has a low effective dimension.  Excessive rank reduction can instead degrade the direction.  Large damping improves conditioning but moves the update toward the baseline Riemannian gradient.

\begin{figure}[t]
    \centering
    \includegraphics[width=\linewidth]{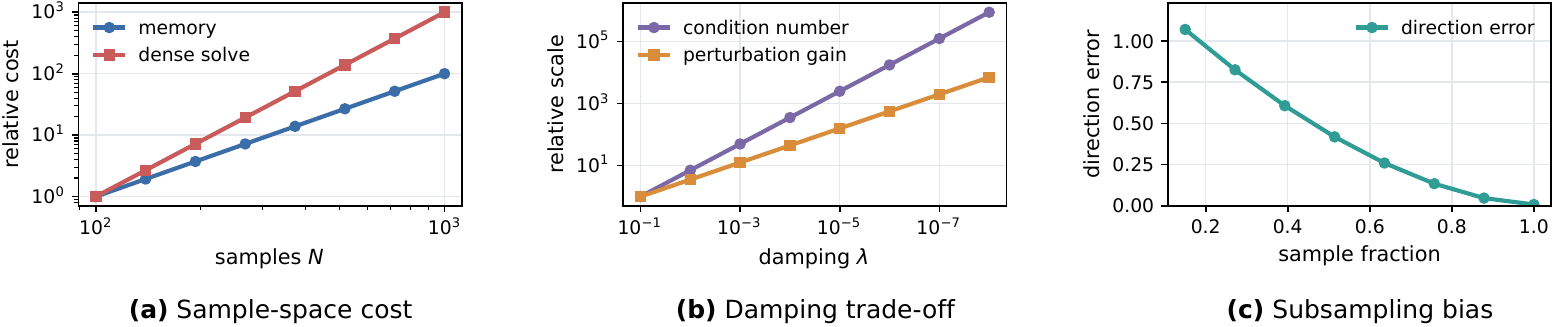}
    \caption{Sample-kernel growth, damping sensitivity, and residual-subsampling effects.}
    \label{fig:discussion_minsr_limitations}
\end{figure}

The residual-Jacobian diagnostic in Figure~\ref{fig:discussion_engd_woodbury_minsr_failure} provides the same perspective for the ENGD--Woodbury implementation.  The kernel spectrum spans several orders of magnitude, which explains the sensitivity of the undamped solve to small perturbations.  
The result also quantifies the change in direction caused by residual batches and insufficient Nystr\"om rank.  Such effects are numerical properties of the sampled tangent system rather than changes in the EMNGD geometry.  Damping, validation batches, and controlled Krylov tolerances provide practical safeguards, but no setting removes the underlying sample-space trade-off.

\begin{figure}[t]
    \centering
    \includegraphics[width=\linewidth]{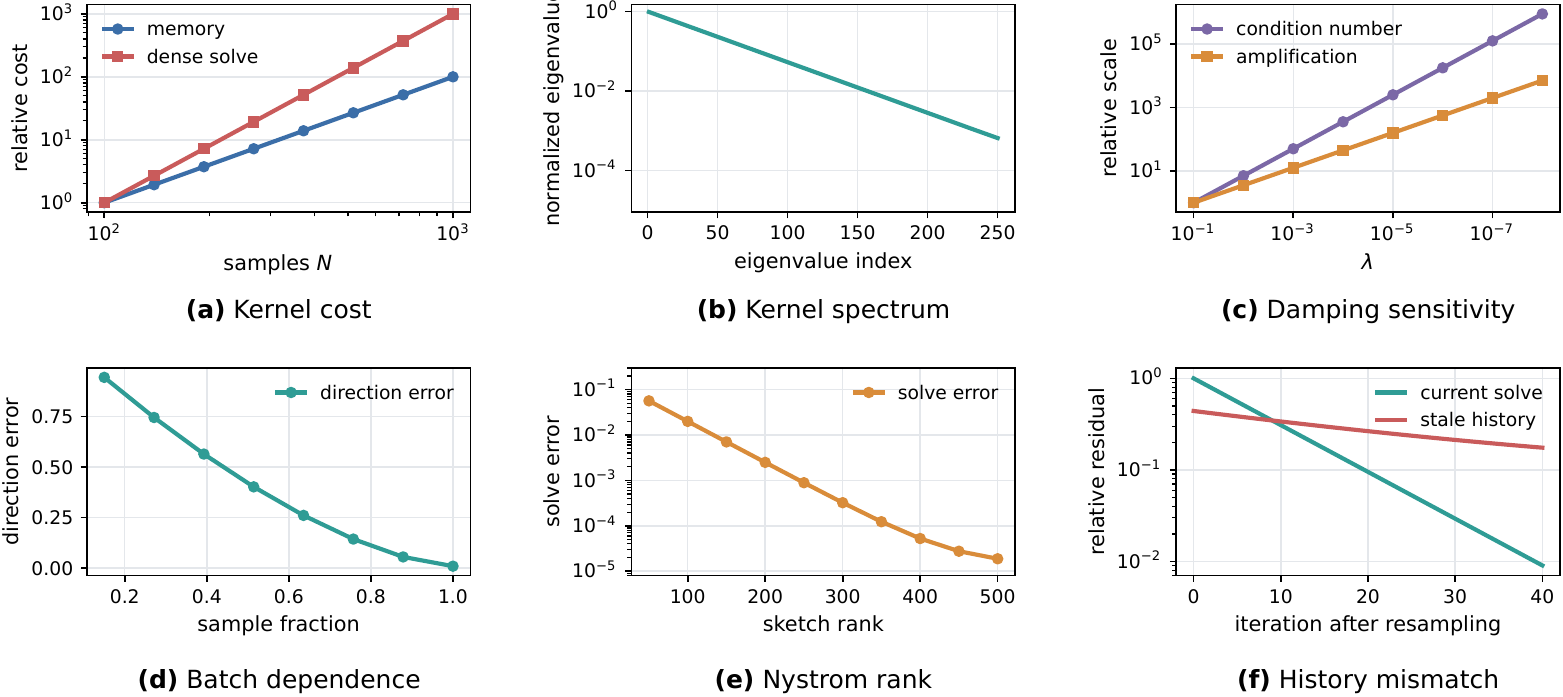}
    \vspace{-18pt}
    \caption{Residual-Jacobian sensitivity to damping, sampling, and Nystr\"om rank.}
    \label{fig:discussion_engd_woodbury_minsr_failure}
\end{figure}

\paragraph{Evidence scope.}
The reported benchmarks verify Euclidean reduction, primal--dual Woodbury agreement, and high-accuracy optimization on the stated neural PDE problems.  The residual-formulation tests also confirm that hard boundary embeddings can be handled without using the exact PDE solution during training.  Several tables use different architectures, stopping rules, or hardware settings.  Such records provide solver coverage and endpoint evidence rather than a uniform ranking.  Most accuracy studies evaluate the Euclidean specialization.  Architecture-matched tests on genuinely constrained neural PDE models will clarify the empirical value of the manifold component beyond the geometric guarantees.

\section{Conclusion}
We introduced \EMNGDfull{}, an intrinsic extension of ENGD from an unconstrained Euclidean parameter domain to a Riemannian parameter manifold.  
EMNGD combines the feasible geometry of the parameter manifold with the energy geometry induced by the neural PDE objective.  
The method solves the energy-metric quadratic model over feasible tangent directions and uses a retraction to preserve the parameter constraints.

Under coercivity, the push-forward of the undamped EMNGD direction is the best feasible approximation to the function-space Newton vector in the energy metric.  
The analysis also establishes well-posedness under damping, coordinate invariance, exact reduction to ENGD in Euclidean space, and global first-order convergence with Armijo backtracking.  
Controlled inexact tangent solves retain the descent property.
For quadratic residual energies and generalized Gauss--Newton pullbacks, the Woodbury identity gives an exact sample-space representation of the damped tangent direction.  
Nystr\"om sketching produces an approximate direction, whereas Nystr\"om-preconditioned Krylov solves recover the exact direction after convergence.  
The resulting solvers are most useful when the residual count is smaller than the intrinsic parameter dimension.

Experiments verify Euclidean reduction, primal--dual agreement, and accurate neural PDE optimization under the reported settings.  
The framework provides a geometric basis for constrained neural PDE solvers without altering the residual energy.  
Future work can evaluate architecture-matched constrained manifolds and develop matrix-free solvers for larger residual systems.

\acks{The study was supported by the National Natural Science Foundation of China (12572138). 
The authors declare no competing interests.}

\bibliography{bib}

@book{nocedal1999numerical,
  title={Numerical optimization},
  author={Nocedal, Jorge and Wright, Stephen J},
  year={1999},
  publisher={Springer}
}

@misc{jax2018github,
  author={Bradbury, James and Frostig, Roy and Hawkins, Peter and Johnson, Matthew James and Leary, Chris and Maclaurin, Dougal and Necula, George and Paszke, Adam and VanderPlas, Jake and Wanderman-Milne, Skye and Zhang, Qiao},
  title={{JAX}: composable transformations of {Python}+{NumPy} programs},
  year={2018},
  url={http://github.com/google/jax}
}

@inproceedings{mullerposition,
  title={Position: Optimization in {SciML} Should Employ the Function Space Geometry},
  author={M{\"u}ller, Johannes and Zeinhofer, Marius},
  booktitle={Forty-first International Conference on Machine Learning},
  year={2024}
}

@inproceedings{muller2023achieving,
  title={Achieving high accuracy with {PINN}s via energy natural gradient descent},
  author={M{\"u}ller, Johannes and Zeinhofer, Marius},
  booktitle={International Conference on Machine Learning},
  pages={25471--25485},
  year={2023},
  organization={PMLR}
}

@article{jnini2024gauss,
  title={{Gauss-Newton} Natural Gradient Descent for Physics-Informed Computational Fluid Dynamics},
  author={Jnini, Anas and Vella, Flavio and Zeinhofer, Marius},
  journal={arXiv preprint arXiv:2402.10680},
  year={2024}
}

@article{dangel2024kronecker,
  title={Kronecker-Factored Approximate Curvature for Physics-Informed Neural Networks},
  author={Dangel, Felix and M{\"u}ller, Johannes and Zeinhofer, Marius},
  journal={arXiv preprint arXiv:2405.15603},
  year={2024}
}

@article{chen2023efficient,
  title={Efficient optimization of deep neural quantum states toward machine precision},
  author={Chen, Ao and Heyl, Markus},
  journal={arXiv preprint arXiv:2302.01941},
  year={2023}
}

@article{rende2024simple,
  title={A simple linear algebra identity to optimize large-scale neural network quantum states},
  author={Rende, Riccardo and Viteritti, Luciano Loris and Bardone, Lorenzo and Becca, Federico and Goldt, Sebastian},
  journal={Communications Physics},
  volume={7},
  number={1},
  pages={260},
  year={2024},
  publisher={Nature Publishing Group UK London}
}

@article{goldshlager2024kaczmarz,
  title={A {K}aczmarz-inspired approach to accelerate the optimization of neural network wavefunctions},
  author={Goldshlager, Gil and Abrahamsen, Nilin and Lin, Lin},
  journal={Journal of Computational Physics},
  volume={516},
  pages={113351},
  year={2024},
  publisher={Elsevier}
}

@article{frangella2023randomized,
  title={Randomized {N}ystr{\"o}m preconditioning},
  author={Frangella, Zachary and Tropp, Joel A and Udell, Madeleine},
  journal={SIAM Journal on Matrix Analysis and Applications},
  volume={44},
  number={2},
  pages={718--752},
  year={2023},
  publisher={SIAM}
}

@article{gittens2016nystrom,
  title={Revisiting the {N}ystr{\"o}m Method for Improved Large-Scale Machine Learning},
  author={Gittens, Alex and Mahoney, Michael W.},
  journal={Journal of Machine Learning Research},
  volume={17},
  number={117},
  pages={1--65},
  year={2016}
}

@article{nie2026nystrom,
  title={{N}ystr{\"o}m Approximation on Manifolds},
  author={Nie, Hantao and Gao, Bin and Han, Andi and Jawanpuria, Pratik and Mishra, Bamdev and Wen, Zaiwen},
  journal={arXiv preprint arXiv:2605.14933},
  year={2026}
}

@inproceedings{novak2022fast,
  title={Fast finite width neural tangent kernel},
  author={Novak, Roman and Sohl-Dickstein, Jascha and Schoenholz, Samuel S},
  booktitle={International Conference on Machine Learning},
  pages={17018--17044},
  year={2022},
  organization={PMLR}
}

@article{hao2021efficient,
  title={An efficient greedy training algorithm for neural networks and applications in {PDEs}},
  author={Hao, Wenrui and Jin, Xianlin and Siegel, Jonathan W and Xu, Jinchao},
  journal={arXiv preprint arXiv:2107.04466},
  year={2021}
}

@inproceedings{
zeng2022competitive,
title={Competitive Physics Informed Networks},
author={Qi Zeng and Spencer H Bryngelson and Florian Tobias Schaefer},
booktitle={ICLR 2022 Workshop on Gamification and Multiagent Solutions},
year={2022},
url={https://openreview.net/forum?id=rMz_scJ6lc}
}

@article{lu2021deepxde,
  title={DeepXDE: A deep learning library for solving differential equations},
  author={Lu, Lu and Meng, Xuhui and Mao, Zhiping and Karniadakis, George Em},
  journal={SIAM Review},
  volume={63},
  number={1},
  pages={208--228},
  year={2021},
  publisher={SIAM}
}

@article{wang2022respecting,
  title={Respecting causality is all you need for training physics-informed neural networks},
  author={Wang, Sifan and Sankaran, Shyam and Perdikaris, Paris},
  journal={arXiv preprint arXiv:2203.07404},
  year={2022}
}

@article{zapf2022investigating,
  title={Investigating molecular transport in the human brain from MRI with physics-informed neural networks},
  author={Zapf, Bastian and Haubner, Johannes and Kuchta, Miroslav and Ringstad, Geir and Eide, Per Kristian and Mardal, Kent-Andre},
  journal={Scientific Reports},
  volume={12},
  number={1},
  pages={1--12},
  year={2022},
  publisher={Nature Publishing Group}
}

@article{daw2022rethinking,
  title={Rethinking the importance of sampling in physics-informed neural networks},
  author={Daw, Arka and Bu, Jie and Wang, Sifan and Perdikaris, Paris and Karpatne, Anuj},
  journal={arXiv preprint arXiv:2207.02338},
  year={2022}
}

@article{wu2023comprehensive,
  title={A comprehensive study of non-adaptive and residual-based adaptive sampling for physics-informed neural networks},
  author={Wu, Chenxi and Zhu, Min and Tan, Qinyang and Kartha, Yadhu and Lu, Lu},
  journal={Computer Methods in Applied Mechanics and Engineering},
  volume={403},
  pages={115671},
  year={2023},
  publisher={Elsevier}
}

@inproceedings{muller2022error,
  title={Error estimates for the deep Ritz method with boundary penalty},
  author={M{\"u}ller, Johannes and Zeinhofer, Marius},
  booktitle={Mathematical and Scientific Machine Learning},
  pages={215--230},
  year={2022},
  organization={PMLR}
}

@incollection{schwedes2017mesh,
  title={Mesh dependence in {PDE}-constrained optimisation},
  author={Schwedes, Tobias and Ham, David A and Funke, Simon W and Piggott, Matthew D},
  booktitle={Mesh Dependence in {PDE}-Constrained Optimisation},
  pages={53--78},
  year={2017},
  publisher={Springer}
}

@article{schwedes2016iteration,
  title={An iteration count estimate for a mesh-dependent steepest descent method based on finite elements and {R}iesz inner product representation},
  author={Schwedes, Tobias and Funke, Simon W and Ham, David A},
  journal={arXiv preprint arXiv:1606.08069},
  year={2016}
}

@article{Mueller2022Convergence, 
title = {Geometry and Convergence of Natural Policy Gradients},
year = {2022}, 
author = {M\"uller, Johannes and Mont\'ufar, Guido},
journal = {MPI MiS Preprint 31/2022},
url ={https://www.mis.mpg.de/publications/preprints/2022/prepr2022-31.html}}

@article{Li2018natural,
	Author = {Li, Wuchen and Mont{\'u}far, Guido},
	Da = {2018/12/01},
	noDoi = {10.1007/s41884-018-0015-3},
	Id = {Li2018},
	Isbn = {2511-249X},
	Journal = {Information Geometry},
	Number = {2},
	Pages = {181--214},
	Title = {Natural gradient via optimal transport},
	Ty = {JOUR},
	Url = {https://doi.org/10.1007/s41884-018-0015-3},
	Volume = {1},
	Year = {2018}}

@inproceedings{bagnell2003covariant,
  author={J. Andrew Bagnell and Jeff G. Schneider},
  title={Covariant Policy Search},
  year={2003},
  cdate={1041379200000},
  pages={1019-1024},
  nourl={http://ijcai.org/Proceedings/03/Papers/146.pdf},
  booktitle={IJCAI}, 
  nocrossref={conf/ijcai/2003}
}

@inproceedings{peters2003reinforcement,
  title={Reinforcement learning for humanoid robotics},
  author={Peters, Jan and Vijayakumar, Sethu and Schaal, Stefan},
  booktitle={Proceedings of the third IEEE-RAS international conference on humanoid robots},
  pages={1--20},
  year={2003}
}

@article{kakade2001natural,
  title={A natural policy gradient},
  author={Kakade, Sham M},
  journal={Advances in Neural Information Processing Systems},
  volume={14},
  year={2001}
}

@article{courte2023robin,
  title={Robin {Pre-Training for the Deep Ritz Method}},
  author={Courte, Luca and Zeinhofer, Marius},
  journal={Northern Lights Deep Learning Conference},
  year={2023}
}

@inproceedings{morimura2008new,
  title={A New Natural Policy Gradient
by Stationary Distribution Metric},
  author={Morimura, Tetsuro and Uchibe, Eiji and Yoshimoto, Junichiro and Doya, Kenji},
  booktitle={Joint European Conference on Machine Learning and Knowledge Discovery in Databases},
  pages={82--97},
  year={2008},
  organization={Springer}
}

@article{martens2020new,
  title={New insights and perspectives on the natural gradient method},
  author={Martens, James},
  journal={The Journal of Machine Learning Research},
  volume={21},
  number={1},
  pages={5776--5851},
  year={2020},
  publisher={JMLRORG}
}

@article{van2022invariance,
  title={Invariance properties of the natural gradient in overparametrised systems},
  author={van Oostrum, Jesse and M{\"u}ller, Johannes and Ay, Nihat},
  journal={Information Geometry},
  pages={1--17},
  year={2022},
  publisher={Springer}
}

@article{amari2010information,
  title={Information geometry of divergence functions},
  author={Amari, {Shun-ichi} and Cichocki, Andrzej},
  journal={Bulletin of the {P}olish academy of sciences. Technical sciences},
  volume={58},
  number={1},
  pages={183--195},
  year={2010}
}

@article{wang2022hessian,
  title={Hessian informed mirror descent},
  author={Wang, Li and Yan, Ming},
  journal={Journal of Scientific Computing},
  volume={92},
  number={3},
  pages={1--22},
  year={2022},
  publisher={Springer}
}

@book{amari2016information,
  title={Information geometry and its applications},
  author={Amari, {Shun-ichi}},
  volume={194},
  year={2016},
  publisher={Springer},
  address={Japan}
}

@article{nurbekyan2022efficient,
  title={Efficient Natural Gradient Descent Methods for Large-Scale Optimization Problems},
  author={Nurbekyan, Levon and Lei, Wanzhou and Yang, Yunan},
  journal={arXiv:2202.06236},
  year={2022}
}

@article{weinan2017deep,
  title={Deep learning-based numerical methods for high-dimensional parabolic partial differential equations and backward stochastic differential equations},
  author={E, Weinan and Han, Jiequn and Jentzen, Arnulf},
  journal={Communications in Mathematics and Statistics},
  volume={5},
  number={4},
  pages={349--380},
  year={2017},
  publisher={Springer}
}

@article{weinan2018deep,
  title={{The Deep Ritz Method: A Deep Learning-Based Numerical Algorithm for Solving Variational Problems}},
  author={E, Weinan and Yu, Bing},
  journal={Communications in Mathematics and Statistics},
  volume={6},
  number={1},
  pages={1--12},
  year={2018},
  publisher={Springer}
}

@article{dissanayake1994neural,
  title={Neural-network-based approximations for solving partial differential equations},
  author={Dissanayake, MWMG and Phan-Thien, N},
  journal={communications in Numerical Methods in Engineering},
  volume={10},
  number={3},
  pages={195--201},
  year={1994},
  publisher={Wiley Online Library}
}

@article{ritz1909neue,
  title={{{\"U}ber eine neue Methode zur L{\"o}sung gewisser Variationsprobleme der mathematischen Physik.}},
  author={Ritz, Walter},
  journal={Journal f{\"u}r die reine und angewandte Mathematik (Crelles Journal)},
  volume={1909},
  number={135},
  pages={1--61},
  year={1909},
  publisher={De Gruyter}
}

@article{lagaris1998artificial,
  title={Artificial neural networks for solving ordinary and partial differential equations},
  author={Lagaris, Isaac E and Likas, Aristidis and Fotiadis, Dimitrios I},
  journal={IEEE transactions on neural networks},
  volume={9},
  number={5},
  pages={987--1000},
  year={1998},
  publisher={IEEE}
}

@inproceedings{jacot2018neural,
  title={{Neural tangent kernel: Convergence and generalization in neural networks}},
  author={Jacot, Arthur and Gabriel, Franck and Hongler, Cl{\'e}ment},
  booktitle={Advances in neural information processing systems},
  pages={8571--8580},
  year={2018}
}

@article{sirignano2018dgm,
  title={{DGM: A deep learning algorithm for solving partial differential equations}},
  author={Sirignano, Justin and Spiliopoulos, Konstantinos},
  journal={Journal of computational physics},
  volume={375},
  pages={1339--1364},
  year={2018},
  publisher={Elsevier}
}

@article{beck2020overview,
  title={An overview on deep learning-based approximation methods for partial differential equations},
  author={Beck, Christian and Hutzenthaler, Martin and Jentzen, Arnulf and Kuckuck, Benno},
  journal={arXiv preprint arXiv:2012.12348},
  year={2020}
}

@article{han2018solving,
  title={Solving high-dimensional partial differential equations using deep learning},
  author={Han, Jiequn and Jentzen, Arnulf and Weinan, E},
  journal={Proceedings of the National Academy of Sciences},
  volume={115},
  number={34},
  pages={8505--8510},
  year={2018},
  publisher={National Acad Sciences}
}

@article{raissi2019physics,
  title={Physics-informed neural networks: A deep learning framework for solving forward and inverse problems involving nonlinear partial differential equations},
  author={Raissi, Maziar and Perdikaris, Paris and Karniadakis, George E},
  journal={Journal of Computational physics},
  volume={378},
  pages={686--707},
  year={2019},
  publisher={Elsevier}
}

@article{wang2021understanding,
  title={Understanding and mitigating gradient flow pathologies in physics-informed neural networks},
  author={Wang, Sifan and Teng, Yujun and Perdikaris, Paris},
  journal={SIAM Journal on Scientific Computing},
  volume={43},
  number={5},
  pages={A3055--A3081},
  year={2021},
  publisher={SIAM}
}

@article{wang2022and,
  title={When and why {PINNs} fail to train: A neural tangent kernel perspective},
  author={Wang, Sifan and Yu, Xinling and Perdikaris, Paris},
  journal={Journal of Computational Physics},
  volume={449},
  pages={110768},
  year={2022},
  publisher={Elsevier}
}

@article{krishnapriyan2021characterizing,
  title={Characterizing possible failure modes in physics-informed neural networks},
  author={Krishnapriyan, Aditi and Gholami, Amir and Zhe, Shandian and Kirby, Robert and Mahoney, Michael W},
  journal={Advances in Neural Information Processing Systems},
  volume={34},
  pages={26548--26560},
  year={2021}
}

@article{davi2022pso,
  title={PSO-PINN: Physics-Informed Neural Networks Trained with Particle Swarm Optimization},
  author={Davi, Caio and Braga-Neto, Ulisses},
  journal={arXiv preprint arXiv:2202.01943},
  year={2022}
}

@article{nabian2021efficient,
  title={Efficient training of physics-informed neural networks via importance sampling},
  author={Nabian, Mohammad Amin and Gladstone, Rini Jasmine and Meidani, Hadi},
  journal={Computer-Aided Civil and Infrastructure Engineering},
  volume={36},
  number={8},
  pages={962--977},
  year={2021},
  publisher={Wiley Online Library}
}

@article{van2022optimally,
  title={Optimally weighted loss functions for solving pdes with neural networks},
  author={van der Meer, Remco and Oosterlee, Cornelis W and Borovykh, Anastasia},
  journal={Journal of Computational and Applied Mathematics},
  volume={405},
  pages={113887},
  year={2022},
  publisher={Elsevier}
}

@article{weinan2021algorithms,
  title={Algorithms for solving high dimensional {PDEs}: from nonlinear Monte Carlo to machine learning},
  author={Weinan, E and Han, Jiequn and Jentzen, Arnulf},
  journal={Nonlinearity},
  volume={35},
  number={1},
  pages={278},
  year={2021},
  publisher={IOP Publishing}
}

@inproceedings{
li2021fourier,
title={Fourier Neural Operator for Parametric Partial Differential Equations},
author={Zongyi Li and Nikola Borislavov Kovachki and Kamyar Azizzadenesheli and Burigede liu and Kaushik Bhattacharya and Andrew Stuart and Anima Anandkumar},
booktitle={International Conference on Learning Representations},
year={2021},
url={https://openreview.net/forum?id=c8P9NQVtmnO}
}

@inproceedings{muller2022notes,
  title={Notes on exact boundary values in residual minimisation},
  author={M{\"u}ller, Johannes and Zeinhofer, Marius},
  booktitle={Mathematical and Scientific Machine Learning},
  pages={231--240},
  year={2022},
  organization={PMLR}
}

@inproceedings{pascanu2014revisiting,
    title={Revisiting natural gradient for deep networks},
    author={Pascanu, Razvan and Bengio, Yoshua},
    booktitle={International Conference on Learning Representations},
    year={2014},
    url={https://openreview.net/forum?id=vz8AumxkAfz5U}
}

@inproceedings{lin2021wasserstein,
  title={Wasserstein proximal of GANs},
  author={Lin, Alex Tong and Li, Wuchen and Osher, Stanley and Mont{\'u}far, Guido},
  booktitle={International Conference on Geometric Science of Information},
  pages={524--533},
  year={2021},
  organization={Springer}
}

@article{schraudolph2002fast,
  title={Fast curvature matrix-vector products for second-order gradient descent},
  author={Schraudolph, Nicol N},
  journal={Neural computation},
  volume={14},
  number={7},
  pages={1723--1738},
  year={2002},
  publisher={MIT Press}
}

@article{shen2020sinkhorn,
  title={Sinkhorn natural gradient for generative models},
  author={Shen, Zebang and Wang, Zhenfu and Ribeiro, Alejandro and Hassani, Hamed},
  journal={Advances in Neural Information Processing Systems},
  volume={33},
  pages={1646--1656},
  year={2020}
}

@article{kovachki2021neural,
  title={Neural operator: Learning maps between function spaces},
  author={Kovachki, Nikola and Li, Zongyi and Liu, Burigede and Azizzadenesheli, Kamyar and Bhattacharya, Kaushik and Stuart, Andrew and Anandkumar, Anima},
  journal={arXiv preprint arXiv:2108.08481},
  year={2021}
}

@article{cai2019gram,
  title={Gram-gauss-newton method: Learning overparameterized neural networks for regression problems},
  author={Cai, Tianle and Gao, Ruiqi and Hou, Jikai and Chen, Siyu and Wang, Dong and He, Di and Zhang, Zhihua and Wang, Liwei},
  journal={arXiv preprint arXiv:1905.11675},
  year={2019}
}

@article{ren2019efficient,
  title={Efficient subsampled Gauss-Newton and natural gradient methods for training neural networks},
  author={Ren, Yi and Goldfarb, Donald},
  journal={arXiv preprint arXiv:1906.02353},
  year={2019}
}

@article{gargiani2020promise,
  title={On the promise of the stochastic generalized Gauss-Newton method for training DNNs},
  author={Gargiani, Matilde and Zanelli, Andrea and Diehl, Moritz and Hutter, Frank},
  journal={arXiv preprint arXiv:2006.02409},
  year={2020}
}

@article{amari1996neural,
  title={Neural learning in structured parameter spaces-natural Riemannian gradient},
  author={Amari, Shun-ichi},
  journal={Advances in neural information processing systems},
  volume={9},
  year={1996}
}

\end{document}